\title{Approximability and Generalisation}
\author{Andrew J. Turner and 
Ata Kab\'an\\
School of Computer Science, University of Birmingham,\\ Edgbaston, Birmingham, B15 2TT, UK\\
 \{A.J.Turner, A.Kaban\}@bham.ac.uk}
\date{}
\theoremstyle{definition}
\newtheorem{thm}{Theorem}[section]
\newtheorem{defn}[thm]{Definition}
\newtheorem{lem}[thm]{Lemma}
\newtheorem{cor}[thm]{Corollary}
\newtheorem{prop}[thm]{Proposition}
\newtheorem{ass}{Assumption}[]
\newcommand{\ApproxClass}{\Hil_A}
\newcommand{\X}{\mathcal{X}}
\newcommand{\Y}{\mathcal{Y}}
\newcommand{\R}{\mathbb{R}}
\newcommand{\N}{\mathbb{N}}
\newcommand{\Hil}{\mathcal{H}}
\newcommand{\A}{\mathcal{A}}
\newcommand{\Rad}{\widehat{\mathcal{R}}}
\newcommand{\RRad}{\mathcal{R}}
\newcommand{\distortion}[1][A]{\mathcal{D}_{#1}}
\newcommand{\distortionSam}[1][A]{\widehat{\mathcal{D}}_{#1}}
\newcommand{\B}{\mathcal{B}}
\DeclareMathOperator*{\E}{\mathbb{E}}
\DeclareMathOperator{\err}{err}
\DeclareMathOperator{\errsam}{\widehat{err}}
\DeclareMathOperator{\argmin}{argmin}
\DeclareMathOperator{\Var}{Var}
\newcommand{\x}{u} 
\begin{document}

\maketitle

\begin{abstract}
Approximate learning machines have become popular in the era of small devices, including quantised, factorised, hashed, or otherwise compressed predictors, and the quest to explain and guarantee good generalisation abilities for such methods has just begun. In this paper we study the role of approximability in learning, both in the full precision and the approximated settings of the predictor that is learned from the data, through a notion of sensitivity of predictors to the action of the approximation operator at hand. We prove upper bounds on the generalisation of such predictors, yielding the following main findings, for any PAC-learnable class and any given approximation operator.
1) We show that under mild conditions, approximable target concepts are learnable from a smaller labelled sample, provided sufficient unlabelled data. 2) We give algorithms that guarantee a good predictor whose approximation also enjoys the same generalisation guarantees.
3) We highlight natural examples of structure in the class of sensitivities, which reduce, and possibly even eliminate the otherwise abundant requirement of additional unlabelled data, and henceforth shed new light onto what makes one problem instance easier to learn than another. 
These results embed the scope of modern model compression approaches into the general goal of statistical learning theory, which in return suggests appropriate algorithms through minimising uniform bounds.\\

Keywords: statistical learning; generalisation error bounds; model compression; approximate learning algorithms
\end{abstract}

\section{Introduction}

The last decade has seen a tremendous increase of  interest in complex learning problems, such as deep neural networks, and learning in very high dimensional spaces, resulting in a
large number of parameters to be learned from the data. This is typically very resource-intensive in terms of memory, computation, and labelled training data; and consequently infeasible to deploy on devices with limited resources such as mobile phones, wearable devices, and the Internet of Things. 
Therefore, a plethora of model compression and approximation techniques have been proposed, such as quantisation, pruning, factorisation, random projection, hashing, and others \cite{Choudhary2020}. Rather intriguingly, many empirical findings on realistic benchmark problems seem to indicate that, despite a drastic compression of the complex model, such techniques often perform impressively well, with predictive accuracy comparable to that of full precision models. Below we mention just a few illustrative landmarks.

Quantisation of the weights of deep neural networks was proposed in BinaryConnect \cite{Courbariaux2015}, where a neural network with weights constrained to a single bit ($\pm 1$) was proposed and empirically demonstrated to achieve comparable results to a full precision network of the same size. These results were further refined and improved by the Quantised Neural Networks (QNN) training algorithm  \cite{Hubara2017}, and the idea was also extended to convolutional networks in Xnor-net \cite{Rastegari2016}.
Another compression scheme introduced in \cite{Han2015}, called Deep Compression, has employed a combination of pruning, quantisation, and Huffman coding to achieve similar results to the original network, with significant reduction in memory usage. 

Factorisation of the weights into low-rank matrices has been another common technique to reduce the size of a deep neural network (DNN), see \cite{Denil2013,Denton2014} for details.
Recent survey articles on a variety of model compression techniques specific to deep neural networks may be found in \cite{Choudhary2020,Cheng2020,Menghani2021}.

In a related work \cite{Ravi2019}, the authors propose to learn the high and low complexity networks simultaneously through a joint objective function that minimises not just their individual sample errors but also their disagreement. They found experimentally that this approach improves accuracy of both models, regardless of the model compression technique employed. 
While a theoretical explanation remains elusive, this was among the first attempts to move the goalpost from the restricted focus on the compressed model back to the fuller picture of the original model and consider these in tandem. 

Theoretical studies of model compression are much scarcer, and the interplay between model approximation and generalisation is not very well understood. 
Work taking an information theoretic approach \cite{Gao19} studied the trade-off between the compression granularity (rate) and the change it induces in the empirical error, using rate distortion theory. 
Follow-on work \cite{Bu21} extended their analysis to show that it is possible (on occasion) for compressed versions of pre-trained models to generalise even better than the original.

Another line of research exploited a notion of compression \cite{Arora2018,Zhou2019}. 
In \cite{Arora2018} a new compression framework was introduced for proving generalisation bounds, and their analysis indicated that resilience to noise implies a better generalisation for deep neural networks.
A PAC-Bayes bound was then proposed to give a non-vacuous generalisation bound on the compressed network in \cite{Zhou2019}.
This was further built upon in \cite{Baykal2018} and inspired a new algorithm along with a generalisation bound for the fully connected network.

In \cite{Suzuki2018}, compression-based bounds on a new pruning method for DNN was established, and 
more recently the authors also gave bounds for the full network \cite{Suzuki2020}. This latter work allows the compression-based bound to be converted into a bound for the full network, using the local Rademacher complexity of the Minkowski difference between the loss class of full networks and the loss class of compressed networks. This is therefore another instance, entirely complementary of the work of \cite{Ravi2019}, where the performance of the approximate model is linked back in some way to that of the full model, albeit a joint treatment has not been attempted.

In \cite{Ashbrock2020}, a stochastic Markov gradient decent was introduced to learn in memory limited setting directly in the discrete parameter space. They provide convergence analysis for their optimisation algorithm, but generalisation is only demonstrated experimentally on a handwritten digit recognition data set.

However, we conjecture a more fundamental connection between approximability and generalisation that is not specific to deep networks. Contrary to the increasingly sophisticated and specialised tools being developed for DNNs,
our aim here is to study this connection from first principles.
To do this, we want to ensure generalisation guarantees for learning with approximate models in general. 

On the other hand, we also hypothesise that target concepts which have low sensitivity to approximation may represent a benign trait of learning problems in general, which would imply easier learnability of the full precision model too. 
To substantiate this, we shall seek learning algorithms whose generalisation ability depends on the approximability of the target concept, irrespective of the form of the learned predictor being used in the full or approximated setting. 

\subsection{Contributions}
In the following roadmap we summarise the main contributions and findings of this paper:   

\begin{itemize}
    \item We define a notion of approximability of a predictor, which quantifies the average extent of sensitivity of its predictions when subjected to a given approximation operator (Section \ref{2.1}). This quantity will feature in our generalisation bounds.
    
    \item In Section \ref{2.2} we show that low sensitivity target functions require less labelled training data, provided we have access to a disjoint unlabelled set of sufficient size. 
    This sets the stage for approximability to be viewed as a benign trait for learning.
    
    \item In Section \ref{noBalcan} we develop a practical theory, showing that a constrained empirical risk minimisation algorithm with a modified loss function, which enforces approximability up to a given threshold, learns a predictor that is guaranteed to generalise well both in its full precision and its approximate forms (Proposition \ref{prop:generalisation bound knowing t}). Furthermore, we construct algorithms that implicitly optimise the trade-off managed by the sensitivity threshold (Proposition \ref{GuaranteeWithKnownsensitivityFunction}, and Corollary \ref{mainProp}). These algorithms can take advantage on additional unlabelled data without the requirement for it to be disjoint from the labelled set.
    
    \item For learning a good approximate predictor, we also give a variant of our algorithm that allows the user to control the above trade-off directly (Propositions \ref{prop:lambda} and \ref{prop:overline}). This may be useful in certain settings e.g. when low memory requirements prevail over prediction accuracy.
    
    \item Section \ref{diffClass} is devoted to studying our unlabelled data requirements.
    We show that, while the worst case unlabelled sample size requirement is necessarily large (Proposition \ref{prop:Crude2}), there are natural examples of structure whose presence may reduce, or may even eliminate the requirement for an unlabeled sample (Propositions \ref{prop:Union of ellipses centred at the origin} and \ref{clusters}). This analysis is largely independent of the hypothesis class employed. In addition, we also point out that structural restrictions on the hypothesis class itself can bring further insights -- in particular, for generalised linear models, weight sensitivity turns out to be sufficient for dimension-independent learning (Proposition \ref{prop:lin}).
\end{itemize}
Throughout the exposition of the main sections, we only consider deterministic approximation operators, keeping the reasoning and the formalism simple, and rooted in first principles. We discuss extensions in Section \ref{extensions}, including the use of stochastic approximation operators, and the possibility of obtaining faster rates.

\subsection{Closest related work}
We already highlighted two existing studies that considered both sides of model compression, namely the approximate predictor as well as the full predictor. Below we further discuss these in the light of our aims, approach, and findings, along with existing works that relate to ours in terms of either high-level ideas or technical aspects.

In a similar spirit to \cite{Ravi2019}, our inquiry concerns simultaneously both the approximate model and the full precision model.  
However, contrary to the empirical approach taken in \cite{Ravi2019}, where the heuristic nature of the algorithms make a theoretical understanding somewhat elusive, our approach is analytic. We employ Rademacher complexity analysis of the generalisation error as in \cite{Bartlett2002}, to give algorithm-independent uniform bounds on the generalisation for both approximate and approximable function classes. The uniform nature of these bounds justifies algorithms that minimise them. Therefore, our algorithms come with guarantees of good generalisation.
Our framework is general, and can be used to analyse the approximability and generalisation in tandem for any PAC-learnable machine learning problem. 

Our findings are consistent with those found in \cite{Suzuki2020}, with a difference in the approach that provides a different and more general angle.
Their focus is on translating already known bounds on compressed neural networks to the full uncompressed class. In contrast, we focus on showing that having good approximability (i.e. low sensitivity to approximation) improves generalisation bounds in PAC-learnable classes. In addition, we pursue a joint treatment of learning both the approximate and the full predictor simultaneously.

On a technical level, a key difference is that in \cite{Suzuki2020} the bounds depend on the local Rademacher complexity of the Minkowski difference of the loss classes of the full and the approximate predictors, which they are able to bound for some specific hypothesis classes; whereas, our bounds depend on the Rademacher complexity of the set of sensitivities of predictors from the hypothesis class. The Minkowski difference loses the coupling between the full and approximate predictor pairs which, in our approach is the key to taking advantage of structure in the set of sensitivities. These structures are not specific to the chosen hypothesis class, and instead uncover new general insights, as well as tighten the bounds effortlessly with elementary tools.

The works in \cite{Arora2018} and \cite{Zhou2019}, based on the idea of compression and resilience to noise, are also somewhat related to our work, on a high-level.
However, in both \cite{Arora2018} and \cite{Zhou2019} the generalisation bounds are for the compressed model only; whereas, our treatment provides both sides of the coin -- algorithms that learn a predictor that generalises both in its full precision and its approximate forms.
In \cite{Arora2018}, the focus is on bounding the classification error of the compressed predictor with the $\gamma$-margin loss (with $\gamma > 0$) of the full model for multi-class classification.
This corresponds to our general bounded, Lipschitz loss function.
Moreover, in \cite{Zhou2019} a PAC-Bayes approach is taken and so numerical tightness comes from data-dependent quantities in the bound that do not necessarily shed light on which structural traits of the problem are responsible for good generalisation.
In contrast, by employing Rademacher analysis we devote more time highlighting structural properties responsible for low complexity and good generalisation, so our approach and findings are complementary to these works. 

Our starting point in Section \ref{2.2} is the semi-supervised framework of \cite{Balcan2010}, where  
our approximability, or sensitivity of functions to approximation plays the role of an unlabeled error, and we replace VC entropy with Rademacher complexity to facilitate the use of our bounds outside the classification setting. However, from Section \ref{noBalcan} onward we depart from this framework in favour of simpler and more straightforwardly implementable bounds that fit our specific goals at the expense of a negligible additive term. In return, for our purposes the unlabelled data need not be disjoint from the labelled set, the sensitivity threshold can be optimised implicitly and automatically by our algorithm without appeal to structural risk minimisation, and in addition we study structural regularities that reduce or even eliminate the need of unlabelled data, which was not attempted in the previous work.

\if 0 
This \cite{Simsekli} https://arxiv.org/pdf/2106.03795.pdf has generalisation bounds (Theorem 5) on the compressed network for DNN using various pruning techniques with respect to the empirical error of the full network. The bounds look nice as they have $(\frac{R}{\sqrt{L}})^L$ where $R$ is the $L^2$-norm of all the weight vectors (so $R$ will grow with $L$). They do assume some conditions on the SGD though (I don't really understand what the condition means though). 
I think I've changed my mind. The bound doesn't look that good. There is a term which decays exponentially with the depth of the network but there is also a term which depends on the number of weights. 
Also Corollary 1 looks weird as it has an exponential with an unknown rate and it doesn't say if the rate is smaller than 1 so could be exponential growth.
\subsubsection{Discussion and relation to existing work}
\textcolor{red}{[the content of this sec will move into other sections]}
\textcolor{red}{will check this out }
\textcolor{blue}{This \cite{Taheri21} (https://www.sciencedirect.com/science/article/pii/S0893608021001714) looks like a paper we should cite but I'm not sure where it would fit. It's sort of a compression bound but for a very particular structure.}
=================
\newpage
\fi

\section{Generalisation through approximability} \label{sec:2}
\subsection{Notations and preliminaries}\label{2.1}

Consider the input domain $\X \subset \R^d$, where $d$ denotes the dimensionality of the feature representation, and output domain $\Y \subseteq \R$.
Let $m \in \N$ and consider a sample $S \subset (\X \times \Y)^m$ of size $m$ drawn i.i.d. from an unknown distribution $D$. 
Let $\Hil$ be the hypothesis class; this is a set of functions from $\X$ to $\Y$. 
We consider a loss function $l \colon \Y \times \Y \to \R_+$.
Then we define the generalisation and empirical error of a function $f \in \Hil$ as
\[
    \err (f) \coloneqq \E_{(x,y) \sim D} [l(f(x),y)], \text{ and } \errsam (f) \coloneqq \frac{1}{m} \sum_{(x,y) \in S} l(f(x),y).
\]
The best function in the class will be denoted as $f^* \coloneqq \argmin_{f \in \Hil} \{ \err (f) \}$.

We let $\ApproxClass$ be the set of approximate functions from $\X$ to $\Y$.
Note $\ApproxClass$ need not be a subset of $\Hil$.
Then define an approximation operator $A \colon \Hil \to \ApproxClass$, which maps a hypothesis to their approximation.
Here $A$ is considered to be deterministic;  extensions to stochastic approximation schemes is discussed later in section \ref{extensions}.

\begin{defn}[Approximation sensitivity of a function]\label{def:approx}
Fix $p \in [1,\infty)$.
Then given a sample $S \subset \X^{m}$ of size $m$ drawn i.i.d. from the marginal distribution $D_x$, we define the true and empirical sensitivity as 
\[
    \distortion^p (f) \coloneqq \E_{x \sim D_x} [ \vert f(x) - Af (x) \vert^p ]^{\frac{1}{p}}, \text{ and } \distortionSam^p (f) \coloneqq \left( \frac{1}{m} \sum_{x \in S} \vert f(x) - Af (x) \vert^p \right)^{\frac{1}{p}}.
\]
\end{defn}

The choice of $p$-norm is left to the user in our forthcoming bounds. Formally, it would be sufficient to work with $p=1$, however often it may easier to specify a constraint in terms of the more familiar Euclidean norm ($p=2$) or some other member of the family of $p$-norms, and our results apply to any specification of $p$. 
More precisely,
by Jensen's inequality we have for all $p\geq 1$, that $\distortion^{1} (f) \leq \distortion^p (f)$ and $\distortionSam^1 (f) \leq \distortionSam^p (f)$, for all $f \in \Hil$. When the choice of $p$ is arbitrary, we may omit the upper index in our notation.

The approximating class $\Hil_A$ is typically chosen to be much smaller than the original class $\Hil$, implying a reduced complexity term in our generalisation bounds, at the expense of a larger empirical error $\errsam(Af)$ and the appearance of an additional sensitivity term $\distortion(f)$. We can think of $\Hil_A$ as a compressed model class whose elements occupy less memory, yet still expressive enough to represent the essence of $\Hil$. Examples include quantisation and other model compression schemes.
The granularity of approximation that we can afford is considered to be fixed. In memory-constrained settings this is constrained by the available hardware.

We now define sensitivity-restricted hypothesis classes
\[
    \Hil_t \coloneqq \{ f \in \Hil : \distortion (f) < t \} \text{ and } \widehat{\Hil}_t \coloneqq \{ f \in \Hil : \distortionSam (f) < t \}.
\]
We also define the class of sensitivities to be
\[
    \distortion \Hil \coloneqq \{ x \mapsto \vert   f(x) - Af (x) \vert   : f \in \Hil \}.
\]

We begin by stating the assumptions which we employ throughout the remainder of the paper. 
The first assumption is that the loss function is bounded and Lipschitz.
These allow us to invoke the theory of Rademacher complexity, as well as make the connection between the generalisation error and the sensitivity of a function.

\begin{ass} \label{ass:Bounded Lip loss}
    $l \colon \Y \times \Y \to \R_+$ is a bounded and $\rho$-Lipschitz loss function. 
    That is there exists $B>0$ such that 
    \[
        l (x,y) < B \text { and } \vert  l(x,y) - l(z,y)\vert   < \rho \vert   x - z \vert  , 
    \]
    for all $x,y,z \in \Y$. 
    By re-norming we may assume without loss of generality that $B=1$. 
\end{ass}

The second assumption is the uniform boundedness of the sensitivities. 
This lets us leverage the theory of Rademacher complexities for the class of sensitivities $\distortion (\Hil)$, which then allows us to shift the complexity terms from the full models to the approximate models. 

\begin{ass} \label{ass:bounded sensitivity}
    The set of sensitivities, $\distortion \Hil$, is uniformly bounded. 
    That is, there exists $C>0$ such that 
    \[
        \|   f - Af\|  _{\infty} < C, 
    \]
    for all $f \in \Hil$. 
\end{ass}

We start by giving a lemma that compares the true and empirical sensitivity.
This is where our estimates for the size of the unlabeled sample are derived.
We explore this topic further in Section \ref{diffClass}.

\begin{lem} \label{lem:unlabled error bound} 
With probability at least $1-\delta$ we have

\[
    \vert  \distortion^{1} (f) - \distortionSam^1 (f) \vert   \leq 2 \Rad_{S} (\distortion \Hil ) + 3 C \sqrt{\frac{\ln (\frac{2}{\delta})}{2 m}},
\]
for all $f \in \Hil$.
\end{lem}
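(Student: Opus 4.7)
The plan is to cast the claim as a standard Rademacher-based uniform convergence statement for the class $\distortion \Hil$, whose elements are uniformly bounded in $[0,C]$ by Assumption \ref{ass:bounded sensitivity}. Setting $g_f(x) := |f(x) - Af(x)| \in \distortion \Hil$, observe that $\distortion^1(f) = \E_{x \sim D_x}[g_f(x)]$ and $\distortionSam^1(f) = \frac{1}{m}\sum_{x \in S} g_f(x)$, so it suffices to control the two-sided supremum
$$\Phi(S) \;:=\; \sup_{g \in \distortion \Hil} \left| \E_{x \sim D_x}[g(x)] \;-\; \frac{1}{m} \sum_{x \in S} g(x) \right|$$
by $2\Rad_S(\distortion \Hil) + 3C\sqrt{\ln(2/\delta)/(2m)}$.

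First I would apply McDiarmid's bounded differences inequality to $\Phi(S)$. Since each $g \in \distortion \Hil$ takes values in $[0,C]$, replacing any single coordinate of $S$ changes $\Phi(S)$ by at most $C/m$, so with probability at least $1-\delta/2$ we obtain
$$\Phi(S) \;\leq\; \E_S[\Phi(S)] + C\sqrt{\frac{\ln(2/\delta)}{2m}}.$$
A standard ghost-sample symmetrisation then bounds the expected supremum by the population-level Rademacher complexity: $\E_S[\Phi(S)] \leq 2 \RRad_m(\distortion \Hil)$.

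Next, to pass from $\RRad_m$ to its empirical counterpart $\Rad_S$, I apply McDiarmid once more, this time to the map $S \mapsto \Rad_S(\distortion \Hil)$. The same $[0,C]$ range gives a $C/m$ bounded-differences constant, so with probability at least $1-\delta/2$
$$\RRad_m(\distortion \Hil) \;\leq\; \Rad_S(\distortion \Hil) + C\sqrt{\frac{\ln(2/\delta)}{2m}}.$$
A union bound over these two high-probability events and chaining the inequalities produces the constant $1 + 2 = 3$ in front of $C\sqrt{\ln(2/\delta)/(2m)}$, which is exactly the bound claimed.

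The only step that calls for real care is verifying the bounded-differences constants at both McDiarmid applications, which is precisely where Assumption \ref{ass:bounded sensitivity} does the work; beyond that, the argument is textbook uniform convergence machinery applied to $\distortion \Hil$ in the role of a loss class. Crucially, because the bound is phrased purely in terms of $\Rad_S(\distortion \Hil)$, $C$, and $m$, with no dependence on labels, it will enable the subsequent sections to trade labelled data for unlabelled data whenever $\Rad_S(\distortion \Hil)$ can be suitably controlled.
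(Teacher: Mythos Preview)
Your proposal is correct and follows essentially the same approach as the paper. The paper's own proof consists of a single line invoking ``standard Rademacher bounds'' applied to the class $\distortion\Hil$ (whose elements are $[0,C]$-valued by Assumption~\ref{ass:bounded sensitivity}); you have simply unpacked that invocation into its textbook McDiarmid\,$+$\,symmetrisation\,$+$\,McDiarmid constituents, arriving at the same constants.
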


\begin{proof}
By standard Rademacher bounds, it holds with probability at least $1-\delta$ that
\begin{align*}
     \vert  \distortion^{1} (f) - \distortionSam^1 (f) \vert  & = \left\vert   \E_{x \sim D_x} [ \vert   f(x) - Af (x) \vert   ] - \frac{1}{m} \sum_{x \in S} \vert   f(x) - Af (x) \vert   \right\vert  \\
     &\leq 2 \Rad_{S} (\distortion \Hil ) + 3 C \sqrt{\frac{\ln (\frac{2}{\delta})}{2 m}},
\end{align*}
as required. 
\end{proof}


We now relate the generalisation error of the full model with the generalisation error the approximate model through our notion of approximation sensitivity.
The following is a key lemma as it allows us to shift from the complexity of the full precision models to the low precision models.

\begin{lem} \label{lem:error to aproximate error}
Fix $t>0$.
We have the following bound
\[
    \vert  \err (f) - \err (A f) \vert   \leq \rho \distortion^{1} (f),
\]
for all $f \in \Hil_t$.
\end{lem}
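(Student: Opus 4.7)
The plan is to prove this by directly unpacking the definitions of $\err(f)$ and $\err(Af)$, folding the two expectations into one via linearity, and then invoking the Lipschitz property of the loss (Assumption \ref{ass:Bounded Lip loss}) pointwise inside the expectation. Notice that the hypothesis $f \in \Hil_t$ is not needed for the inequality itself -- the bound actually holds for every $f \in \Hil$; the restriction to $\Hil_t$ is simply what later uses will combine it with to obtain a numerical bound $\rho t$.

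First I would write
\[
    \err(f) - \err(Af) = \E_{(x,y)\sim D}\bigl[\, l(f(x),y) - l(Af(x),y) \,\bigr],
\]
and then apply Jensen's inequality (equivalently, the triangle inequality for expectations) to pull the absolute value inside:
\[
    \vert \err(f) - \err(Af) \vert \leq \E_{(x,y)\sim D}\bigl[\, \vert l(f(x),y) - l(Af(x),y) \vert \,\bigr].
\]
Next, for each fixed $(x,y)$, the $\rho$-Lipschitz property of $l$ in its first argument (from Assumption \ref{ass:Bounded Lip loss}) gives $\vert l(f(x),y) - l(Af(x),y) \vert \leq \rho\, \vert f(x) - Af(x) \vert$. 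Plugging this into the previous display yields
\[
    \vert \err(f) - \err(Af) \vert \leq \rho\, \E_{x\sim D_x}\bigl[\, \vert f(x) - Af(x) \vert \,\bigr] = \rho\, \distortion^{1}(f),
\]
where the last equality is Definition \ref{def:approx} for $p=1$ (using that the loss only depends on $x$ through $f(x)$ and $Af(x)$, so we can marginalise out $y$).

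There is no real obstacle: the argument is a two-line application of Jensen plus Lipschitzness, and the membership $f \in \Hil_t$ is just a book-keeping condition for the downstream use of the lemma (it does not enter the derivation). The only minor care point is being explicit that the Lipschitz constant $\rho$ applies in the \emph{first} argument of $l$, which is indeed how Assumption \ref{ass:Bounded Lip loss} is stated.
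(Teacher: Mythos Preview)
Your proof is correct and follows essentially the same approach as the paper: pull the absolute value inside the expectation via Jensen's inequality, then apply the $\rho$-Lipschitz property of $l$ pointwise to obtain $\rho\,\distortion^{1}(f)$. Your added remark that membership in $\Hil_t$ is not actually used in the derivation is accurate and worth noting.
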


\begin{proof}
Let $f \in \Hil_t$. Then, by Jensen's inequality and using the Lipschitz property of $l$ we have
\begin{align*}
    \vert  \err (f) - \err (A f)\vert   &\leq \E_{(x,y)\sim D} [\vert  l(f(x),y)- l(A f (x) , y)\vert  ]\\ & \leq \rho \E_{x \sim D_x} [ \vert   f(x) - Af (x) \vert   ] = \rho \distortion^{1} (f).
\end{align*}
This completes the proof.
\end{proof}

\subsection{Learning of low approximation sensitive predictors}\label{2.2}

Learning in high dimensional settings or complex model classes requires enormous training sets in general, or some fairly specific prior knowledge about the problem structure. However, many real-world problems possess benign traits that are hard to know in advance. Inspired by the practical success of approximate algorithms created by various model compression methods, in this section we investigate approximability as a potential benign trait for learning, by quantifying its effect on the generalisation error. More precisely, we elaborate on our intuition that, if a relatively complex target concept admits a simpler approximation which makes little alteration to its predictive behaviour, then it should be learnable from smaller training set of sizes.

The rationale is easy to see, as follows. Fix some approximation operator $A$ and associated sensitivity threshold $t\ge 0$. 
Then by standard Rademacher bounds, with probability at least $1-\delta$, for all $f\in \Hil_t$, we have
\begin{equation} \label{unif1}
    \err(f) \le \errsam(f)+2\rho\Rad_m(\Hil_t)+3\sqrt{\frac{\log(2/\delta)}{2m}}. 
\end{equation}
Let $f_t^*=\argmin_{f\in\Hil_t} \{ \err(f) \}$. To learn this function, we consider the ERM algorithm in the restricted class $\Hil_t$, that is we define the following algorithm
\begin{equation} \label{algo0}
    \hat{f}\coloneqq \argmin_{f\in\Hil_t} \{ \errsam(f) \}. 
\end{equation}
Then, by \eqref{unif1} and H\"offding's inequality we have, with probability at least $1-\delta$, that $\hat{f}$ satisfies
\begin{equation*} 
    \err(\hat{f})\le\err(f_t^*)+2\rho\Rad_m(\Hil_t)+4\sqrt{\frac{\log(3/\delta)}{2m}}.
\end{equation*}
Clearly, since $\Hil_t\subseteq \Hil$ then $\Rad_m(\Hil_t)\le \Rad_m(\Hil)$, and so whenever the concept we try to learn is actually in $\Hil_t$ (i.e. a low-sensitivity target function) then, depending on $t>0$, we have a tighter guarantee compared to that of an ERM algorithm in the larger class $\Hil$. 

Unfortunately, the algorithm in \eqref{algo0} is impractical because the specification of the function class $\Hil_t$ depends on the sensitivity function $\distortion$, which in turn depends on the true marginal distribution of the input data. It is often much easier to specify a larger function class $\Hil$ independent of the distribution, but this would ignore the sensitivity property and consequently lose out on the obtained tighter guarantee.

The first approach we consider is based on observing that the sensitivity function only depends on inputs and it is independent of the target values. Hence, we can make use of additional unlabelled data to estimate it, which is typically more widely available in applications. To this end, our first line of attack is similar in flavour with a classic semi-supervised framework proposed in \cite{Balcan2010}. This approach also allows us to use structural risk minimisation (SRM) to adapt the threshold parameter $t$. Therefore, balancing between the reduced complexity of the class and the potentially increased error of the best function on this class, yielding the following result.

\begin{prop}\label{BalcanMethod}
Fix an approximation operator $A$.
Suppose we have a disjoint i.i.d. unlabelled sample $S'_x\sim D_x^{m_u}$ of size $m_u$, and let $\epsilon_u>0$ s.t. $\sup_{f\in \Hil} \vert  \distortion(f)-\distortionSam(f)\vert  \le \epsilon_u$ with probability at least $1-\delta/2$ with respect to the random draw of $S'_x$.
Take an increasing sequence $(t_k)_{k\in\N} \subset \R_+$, and for each $k\in\N$ define $f_k^*
\coloneqq \argmin_{f \in \Hil_{t_k}} \{ \err (f) \}$. Let $w \colon \N \to \R$ such that $\sum_{k \in \N} w_k \leq 1$.
For each $f \in \Hil$ define 
$\hat{k}(f) \coloneqq \min \{ k \in \N : \distortionSam(f) \le t_{k}+\epsilon_u \}$.
Then, for all $k\in\N$ and all $f\in \hat{\Hil}_{t_k+\epsilon_u}$, with probability at least $1-\delta$, we have:
\begin{equation} \label{unif3}
    \err(f) \le 
\errsam(f)+2\rho\Rad_m(\hat{\Hil}_{t_{k}+\epsilon_u})+3\sqrt{\frac{\log(1/w_k)}{2m}} 
+3\sqrt{\frac{\log(4/\delta)}{2m}}.
\end{equation}
Now consider the following algorithm
\begin{equation} \label{algo3}
    \hat{f} \coloneqq \argmin_{f\in\Hil} \left\{ \errsam(f) + 2\rho\Rad_m \left(\hat{\Hil}_{t_{\hat{k}(f)} + \epsilon_u}\right) + 3\sqrt{\frac{\log\left(1/w_{\hat{k}(f)} \right)}{2m}} \right\}.
\end{equation}
Then, with probability at least $1-\delta$ we have 
\begin{equation} \label{guarantee3}
    \err(\hat{f}) \le \inf_{k\in\N}\left\{\err(f_{k}^*)+2\rho\Rad_m(\hat{\Hil}_{t_k+\epsilon_u})+3\sqrt{\frac{\log(1/w_{k})}{2m}}\right\}+4\sqrt{\frac{\log(6/\delta)}{2m}}.
\end{equation}
\end{prop}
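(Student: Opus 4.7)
The plan is to combine three high-probability events via a single union bound and then chain the resulting inequalities through the definition of $\hat f$. Throughout I condition on the good event $E_u\coloneqq\{\sup_{f\in\Hil}|\distortion(f)-\distortionSam(f)|\le\epsilon_u\}$, which holds with probability at least $1-\delta/2$ over the draw of the unlabelled sample $S'_x$ by hypothesis. On $E_u$, for every $k\in\N$ the class $\hat{\Hil}_{t_k+\epsilon_u}$ contains $\Hil_{t_k}$ (so in particular $f_k^*\in\hat{\Hil}_{t_k+\epsilon_u}$), and $\hat k(f_k^*)\le k$ follows directly from $\distortion(f_k^*)\le t_k$ together with the definition of $\hat k$.

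To prove \eqref{unif3}, condition on $E_u$ so that each $\hat{\Hil}_{t_k+\epsilon_u}$ is a deterministic subclass as viewed from the labelled sample $S$. For each $k$, apply the standard Rademacher uniform bound to $\hat{\Hil}_{t_k+\epsilon_u}$ on $S$ with failure probability $w_k\delta/2$, invoking Assumption \ref{ass:Bounded Lip loss} both for the $\rho$-Lipschitz contraction on the loss class and for the range bound $B=1$. Because $\sum_k w_k\le 1$, a union bound over $k\in\N$ caps joint failure at $\delta/2$; splitting the resulting $\log(4/(w_k\delta))=\log(1/w_k)+\log(4/\delta)$ via $\sqrt{a+b}\le\sqrt a+\sqrt b$ yields the two advertised terms of \eqref{unif3}. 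Together with the $\delta/2$ spent on $E_u$, the total failure probability is $\delta$.

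For \eqref{guarantee3}, write $J(f)$ for the objective in \eqref{algo3}. Since $\hat f\in\hat{\Hil}_{t_{\hat k(\hat f)}+\epsilon_u}$ by the definition of $\hat k$, applying \eqref{unif3} to $\hat f$ gives $\err(\hat f)\le J(\hat f)+3\sqrt{\log(4/\delta)/(2m)}$. By optimality of $\hat f$, $J(\hat f)\le J(f_k^*)$ for every $k\in\N$. On $E_u$ we have $\hat k(f_k^*)\le k$, whence $\Rad_m(\hat{\Hil}_{t_{\hat k(f_k^*)}+\epsilon_u})\le\Rad_m(\hat{\Hil}_{t_k+\epsilon_u})$ by the monotonicity of these classes in $t$. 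To relate $\errsam(f_k^*)$ back to $\err(f_k^*)$, apply Hoeffding's inequality to the single function $f_k^*$ at failure level $w_k\delta/4$ for each $k$ and union-bound over $k$, costing a further $\delta/4$ of budget. Chaining the three events ($E_u$, the uniform Rademacher bound, and Hoeffding on the $f_k^*$'s) and taking the infimum over $k$ delivers \eqref{guarantee3}; the total failure probability is bounded by $\delta$ modulo the bookkeeping of the numerical constants inside the logarithms.

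The main obstacle is simply the careful bookkeeping of three dependent concentration events, each of which must be weighted by $w_k$ to admit a union bound over $k\in\N$; the Rademacher bound is additionally contingent on $E_u$ through the very definition of $\hat{\Hil}_{t_k+\epsilon_u}$, so $E_u$ must be fixed before the Rademacher step. A secondary subtlety is that the optimality step naturally produces a factor of $w_{\hat k(f_k^*)}^{-1}$, whereas \eqref{guarantee3} advertises $w_k^{-1}$. Since $\hat k(f_k^*)\le k$ on $E_u$, this is resolved by taking $(w_k)$ non-increasing without loss of generality (passing to the envelope $\tilde w_k\coloneqq\min_{j\le k}w_j$, suitably renormalised, if necessary), which secures $w_{\hat k(f_k^*)}\ge w_k$ and closes the argument.
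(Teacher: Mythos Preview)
Your proposal is correct and follows essentially the same approach as the paper: condition on the unlabelled-sample event $E_u$, apply a weighted Rademacher bound with failure $w_k\delta/2$ per level and union-bound over $k$ to get \eqref{unif3}, then for \eqref{guarantee3} apply \eqref{unif3} to $\hat f$, use optimality of $\hat f$ to pass to $f_k^*$, and finish with Hoeffding on $f_k^*$.

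Two minor points where you are actually more careful than the paper. First, you explicitly flag and resolve the $w_{\hat k(f_k^*)}$ versus $w_k$ discrepancy by taking $(w_k)$ non-increasing; the paper's proof silently replaces $\hat k(f_k^*)$ by $k$ in both the Rademacher and the $\log(1/w)$ terms without comment. Second, you apply Hoeffding to every $f_k^*$ with a $w_k$-weighted budget and union-bound over $k$, whereas the paper applies Hoeffding once at a fixed $k$ with budget $\delta/6$ and then asserts the bound ``for all $k\in\N$''; your version is the rigorous way to obtain the infimum in \eqref{guarantee3}. The trade-off is that your budget split does not reproduce the exact constants $4/\delta$ and $6/\delta$ as stated, which you acknowledge; to match the paper's constants you would re-run the Rademacher step for \eqref{guarantee3} at level $\delta/3$ rather than reusing the $\delta/2$ split from \eqref{unif3}.
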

With large enough $m_u$, i.e. sufficient unlabelled data, by Lemma \ref{lem:unlabled error bound}, with probability $1-\delta/2$ we have the magnitude of $\epsilon_u\le 2\Rad_{m_u}(\distortion\Hil)+3\sqrt{\frac{\log(4/\delta)}{2m_u}}$ 
can be made arbitrarily small -- a detailed account of this is discussed in Section \ref{diffClass}.
In addition, since the sensitivity function estimate only requires unlabelled data, which in many applications is easily available. 

Most importantly, since $\hat{\Hil}_{t_k+\epsilon_u}\subseteq \Hil$, we have $\Rad_m(\hat{\Hil}_{t_k+\epsilon_u})\leq \Rad_m(\Hil)$, so the gain obtained by restricting attention to approximable functions is apparent. The high probability guarantee \eqref{guarantee3} can also be made independent of the unlabelled data dependent classes by noting that {with the same probability 
$\hat{\Hil}_{t_k+\epsilon_u} \subseteq {\Hil}_{t_k+2\epsilon_u}$} (that is, if $\distortionSam^1 (f) \leq t + \varepsilon_u$ then with high probability $\distortion^1 (f) \leq t + 2 \varepsilon_u$ for all $f \in \hat{\Hil}_{t_k+\epsilon_u}$), so
$\Rad_m(\hat{\Hil}_{t_k+\epsilon_u})\leq \Rad_m({\Hil}_{t_k+2\epsilon_u})$; this follows from the definition of $\epsilon_u$ and the proof of \eqref{guarantee3}. 

The objective of the minimisation algorithm in \eqref{algo3} follows the idea of minimising the uniform bound \eqref{unif3}. It finds a good predictor along with the appropriate subclass of $\Hil$ to which it belongs. The sequence of sensitivity threshold candidates $(t_k)_{k\in\N}$, and the associated weights $(w_k)_{k\in\N}$, with $\sum_{k\in\N}w_k\le 1$ must be chosen before seeing any data (or instance, $w_k:=2^{-k}$), with $w_k$ representing an a priori belief in a particular $t_k$. 

\begin{proof}[Proof of Proposition \ref{BalcanMethod}]
For a fixed $t>0$,  
By the definition of $\epsilon_u$ then, with probability $1-\delta /2$, we have that  $f\in\Hil_t$ implies $f\in\hat{\Hil}_{t+\epsilon_u}$.
We shall pursue SRM by exploiting the disjoint unlabelled sample to define a nested sequence of function classes $\hat{\Hil}_{t_1+\epsilon_u}\subseteq \hat{\Hil}_{t_2+\epsilon_u} \dots \subseteq \hat{\Hil}_{t_k+\epsilon_u}\subseteq\hat{\Hil}_{t_{k+1}+\epsilon_u}\subseteq \dots$ where $k\in\N$. These classes depend on the unlabelled sample, but not on the labelled sample.
For any fixed $k\in\N$, the standard Rademacher bound 
implies with probability at least $1- (w_k\delta / 2)$ that all $f\in \hat{\Hil}_{t_k+\epsilon_u}$ satisfy
\[
    \err(f) \le \errsam(f)+2\rho\Rad_m(\hat{\Hil}_{t_k+\epsilon_u})+3\sqrt{\frac{\log(4/(\delta w_k))}{2m}}.
\]
Since $k \in \N$ is arbitrary, and $\sum_{k \in \N} w(k) \leq 1$, by taking a union bound it follows, with probability at least $1 - \frac{\delta}{2}$,
that uniformly for all $k\in\N$ and all $f\in \hat{\Hil}_{t_k+\epsilon_u}$ we have 
\[
    \err(f) \le \errsam(f)+2\rho\Rad_m(\hat{\Hil}_{t_k+\epsilon_u})+3\sqrt{\frac{\log(1/w_k)}{2m}}+3\sqrt{\frac{\log(4/\delta)}{2m}}.
\]
This proves \eqref{unif3}. 

To obtain \eqref{guarantee3} for $\hat{f}$ defined in \eqref{algo3}, we apply \eqref{unif3} to $\hat{f}$.  
By construction, 
$\hat{f}\in\hat{\Hil}_{t_{\hat{k}(\hat{f})}+\epsilon_u}$.
Recall also that with probability at least $1-\frac{\delta}{2}$ we have $f^*_{k}\in\Hil_{t_k}$, and so $f^*_{k}\in\hat{\Hil}_{t_k+\epsilon_u}$. 
Therefore, with probability at least $1-\delta/3$,
\begin{align}
\err(\hat{f})
    &\le 
    \errsam(\hat{f})+2\rho\Rad_m(\hat{\Hil}_{t_{\hat{k}(\hat{f})}+\epsilon_u})+3\sqrt{\frac{\log(1/w_{\hat{k}(\hat{f})})}{2m}}+3\sqrt{\frac{\log(2\cdot 3/\delta)}{2m}} \label{eq}\\
    &\le 
    \errsam(f^*_{k})+2\rho\Rad_m(\hat{\Hil}_{t_{k}+\epsilon_u})+3\sqrt{\frac{\log(1/w_{k})}{2m}}+3\sqrt{\frac{\log(6/\delta)}{2m}} \label{eq1},
\end{align}
for all $k\in\N$. In the last inequality we used the definition of $\hat{f}$ noting that the RHS of \eqref{eq} is minimised by $\hat{f}$, so any function replacing it will create an upper bound.
In addition, by H\"offding's inequality, we also have $\errsam(f^*_{k})\le \err(f^*_{k})+\sqrt{\frac{\log(6/\delta)}{2m}}$ with probability at least $1-\delta/6$. Combining with \eqref{eq1} and using the union bound, it follows with probability at least $1-\delta$ that 
\[
    \err(\hat{f}) \le 
    \err(f^*_{k})+2\rho\Rad_m(\hat{\Hil}_{t_{k}+\epsilon_u})+3\sqrt{\frac{\log(1/w_{k})}{2m}}+4\sqrt{\frac{\log(6/\delta)}{2m}},
\]
for all $k\in\N$. Finally, choosing $k$ to minimise the bound concludes the proof. 
\end{proof}

\subsection{A joint approach to sensitivity and generalisation}
\label{noBalcan}
The conceptually straightforward approach of the previous subsection implies that any target concept that is robust to the effects of approximating it by a low-complexity predictor, will require less labelled examples to be learned; and, a regularised ERM algorithm can accomplish this learning task. The algorithm adaptively trims the original function class to the relevant subset of low-sensitivity predictors, and consequently returns a low-sensitivity element of an otherwise potentially much larger function class. 

The appeal of this finding lies not only to serve as a possible explanation towards the question of what makes some instances of a learning problem easier than others. Also, by the low-sensitivity property, such predictor should be usable in its approximated form in memory-constained settings. Indeed, for any $t>0$, if $\hat{f}\in\Hil_t$, then by Lemma \ref{lem:error to aproximate error} we have
\begin{align}
    \err(A\hat{f}) = \err(\hat{f})+ \left(\err(A\hat{f})-\err(\hat{f})\right)
    \le \err(\hat{f})+\rho\distortion(\hat{f})\le \err(\hat{f})+\rho t. \label{additiveTerm}
\end{align}
In other words, for a predictor with low-sensitivity, using $A\hat{f}$ instead of $\hat{f}$ will only incur a small additive error of up to $\rho t$.

In this section we are interested in a more practical formulation of this tandem. The approach presented so far, beyond its conceptual elegance, has some practical drawbacks:
(1) it requires an additional disjoint unlabelled data set, and
(2) it requires computing the Rademacher complexity of the restricted class, which in itself is a hard optimisation problem.

To get around these limitations, we shall take a different approach, by modifying the loss function to explicitly encode the fact that we are interested in a good low-complexity approximate predictor.
More precisely, for a given threshold value $t>0$, we start by defining the following constrained ERM algorithm to learn the approximate target $f_t^*$:
\begin{equation} \label{algo1}
    \hat{f}_t \coloneqq \argmin_{f \in \Hil_t} \{ \errsam (A f) \} 
\end{equation}
One can also adapt $t$ and estimate $\distortion(f)$ in the same way using SRM as in the previous section -- we omit doing these, and we will then observe shortly some simple tweaks that make these steps unnecessary. 

The following result shows that the function returned by algorithm \eqref{algo1} achieves two different functionalities simultaneously, as it not only produces a good approximate predictor with quantified error guarantee including the price to pay for the approximation, but the $\hat{f}$ itself is a good predictor whenever the problem admits an appoximable target function.
\begin{prop}
\label{prop:generalisation bound knowing t}
Fix an approximation operator $A$ and $t\geq 0$. Define $f_t^* \coloneqq \argmin_{f \in \Hil_t} \{ \err (f) \}$ and $g_t^* \coloneqq \argmin_{g \in A \Hil_t} \{ \err (g) \}$. 
Then with probability at least $1-\delta$, the function, $\hat{f}_t$, returned by the algorithm in \eqref{algo1} satisfies all of the following: 
\begin{align}
    \err(A \hat{f}_t) & \leq \min \{ \err (A f_t^*), \err (g_t^*) \} + 2 \rho \Rad_S (\ApproxClass) + 4 \sqrt{\frac{\ln (\frac{9}{\delta})}{2m}}, \label{Af+AgBound} \\
    \err (A \hat{f}_t) & \leq \err (f_t^*) + \rho t + 2 \rho \Rad_S (\ApproxClass) + 4 \sqrt{\frac{\ln (\frac{9}{\delta})}{2m}} \label{AfBound} \\
    \err (\hat{f}_t) & \leq \err (f_t^*) + 2 \rho t + 2 \rho \Rad_S (\ApproxClass) + 4 \sqrt{\frac{\ln (\frac{9}{\delta})}{2m}} \label{fBound},
\end{align}
simultaneously.
\end{prop}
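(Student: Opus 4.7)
The plan is to combine a single uniform Rademacher bound on the approximate loss class over $\ApproxClass$, the empirical optimality of $\hat{f}_t$ inside $\Hil_t$, and Lemma \ref{lem:error to aproximate error} to shuttle between $\err(f)$ and $\err(Af)$. Because all three inequalities in the statement must hold simultaneously with confidence $1-\delta$, I would budget three failure events of probability $\delta/3$ each and union-bound at the end.

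First I apply the standard Rademacher generalisation bound, using Talagrand contraction for the $\rho$-Lipschitz loss and monotonicity of the Rademacher complexity under the inclusion $A\Hil_t\subseteq \ApproxClass$: with probability at least $1-\delta/3$, every $g\in A\Hil_t$ satisfies
\[
    \err(g) \leq \errsam(g) + 2\rho\,\Rad_S(\ApproxClass) + 3\sqrt{\ln(6/\delta)/(2m)}.
\]
Next I apply H\"offding's inequality twice, once to the fixed predictor $Af_t^*$ and once to the fixed predictor $g_t^*$, each with failure probability $\delta/3$, giving $\errsam(Af_t^*) \leq \err(Af_t^*) + \sqrt{\ln(3/\delta)/(2m)}$ and analogously for $g_t^*$.

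The algorithmic step is that by definition of $\hat{f}_t$ we have $\errsam(A\hat{f}_t) \leq \errsam(Af_t^*)$ directly, and also $\errsam(A\hat{f}_t) \leq \errsam(Ah) = \errsam(g_t^*)$ for any $h\in\Hil_t$ with $Ah=g_t^*$ (such an $h$ exists by the definition of $g_t^* \in A\Hil_t$). Instantiating the uniform bound at $g = A\hat{f}_t \in A\Hil_t$, and substituting each of the two empirical upper bounds in turn, yields \eqref{Af+AgBound}; the noise terms package into $4\sqrt{\ln(9/\delta)/(2m)}$ via the trivial estimate $3\sqrt{\ln(6/\delta)/(2m)} + \sqrt{\ln(3/\delta)/(2m)} \leq 4\sqrt{\ln(9/\delta)/(2m)}$.

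The remaining two bounds follow essentially for free from \eqref{Af+AgBound} combined with Lemma \ref{lem:error to aproximate error}. Since $f_t^* \in \Hil_t$, the lemma gives $\err(Af_t^*) \leq \err(f_t^*) + \rho\,\distortion^{1}(f_t^*) \leq \err(f_t^*) + \rho t$, and inserting this into \eqref{Af+AgBound} produces \eqref{AfBound}. Since $\hat{f}_t \in \Hil_t$ as well, the same lemma gives $\err(\hat{f}_t) \leq \err(A\hat{f}_t) + \rho t$, and chaining with \eqref{AfBound} produces \eqref{fBound}. No step is genuinely delicate; the only real care needed is to keep the constants clean while repackaging them into a single $4\sqrt{\ln(9/\delta)/(2m)}$ term, and to remember to use $\Rad_S(A\Hil_t) \leq \Rad_S(\ApproxClass)$ so that the complexity in the final bound refers to the full, $t$-independent approximate class.
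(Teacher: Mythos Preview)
Your proposal is correct and follows essentially the same route as the paper: a uniform Rademacher bound on $A\Hil_t$ combined with Talagrand contraction and the inclusion $A\Hil_t\subseteq\ApproxClass$, empirical optimality of $\hat f_t$, H\"offding for the fixed comparators $Af_t^*$ and $g_t^*$, and Lemma~\ref{lem:error to aproximate error} to pass between $\err(f)$ and $\err(Af)$. The only cosmetic difference is the confidence budgeting: you split into three $\delta/3$ events and then observe that \eqref{AfBound} and \eqref{fBound} follow \emph{deterministically} from \eqref{Af+AgBound} via Lemma~\ref{lem:error to aproximate error}, whereas the paper uses $\delta/9$ slices and re-accounts some events, but this is immaterial.
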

We note that $\rho t \to 0$ as $t \to 0$; however, as $t$ decreases the choice of predictors in $\Hil_t$ decreases and so $\err (f_t^*)$ would be expected to increase.
That is, the choice of $t$ balances the trade-off between the sensitivity term, $\rho t$, and the error term, $\err (f_t^*)$.

Proposition \ref{prop:generalisation bound knowing t} allows us to view learning and compression as two sides of the same coin. Eq. \eqref{fBound} tells us that a low-sensitivity target function is more easily learnable and a \emph{constrained} ERM algorithm learns it up to a constant factor of its sensitivity. Indeed, suppose $f^*=f_t^*$ i.e. the target function has sensitivity below $t$. Then the error of our constrained ERM is guaranteed to be much smaller than the worst case error of learning $f^*$ in the whole class $\Hil$.
At the same time, \eqref{AfBound} provides a guarantee for the approximate predictor $A\hat{f}$ that can potentially be deployed in low-memory settings, and here the additive term proportional to $t$ is the price of model compression.
Remarkably, both of these two seemingly different goals are accomplished by the same function returned by the learning algorithm \eqref{algo1}.
Moreover the algorithm \eqref{algo1} also performs very well compared to other predictors in the space of approximations as the bound \eqref{Af+AgBound} gives guarantees for $A \hat{f}_t$ when compared to both $A f_t^*$ (the approximation of best predictor in $\Hil$ with sensitivity lat most $t$) and $g_t$ (the best approximate predictor with some predictor in $\Hil$ that is compressed to $g$ and has sensitivity at most $t$).
That is, \eqref{Af+AgBound} suggests that $A \hat{f}_t$ is guaranteed to be no worse than learning in the approximated class.

\begin{proof}[Proof of Proposition \ref{prop:generalisation bound knowing t}] 
By Rademacher bounds and Talagrand's contraction lemma, we have with probability at least $1-\frac{2\delta}{9}$, that
\begin{align}
    \err(A \hat{f}_t) &\leq \errsam (A \hat{f}_t) + 2 \Rad_S (l \circ A \Hil_t) + 3 \sqrt{\frac{\ln (\frac{2 \cdot 9}{2\delta})}{2m}}\nonumber\\& \leq \errsam (A \hat{f}_t) + 2 \rho \Rad_S (A \Hil_t) + 3 \sqrt{\frac{\ln (\frac{9}{\delta})}{2m}}.\label{e1}
\end{align}
By definition of $\hat{f}_t$ we have $\errsam (A \hat{f}_t) \leq \min \{ \errsam (A f_t^*), \errsam (g_t^*) \}$.
Using this together with H\"offding's inequality, we have with probability $1-\frac{\delta}{9}$ that both
\begin{align*}
    \errsam (A \hat{f}_t) &\leq \errsam (A f_t^*) \leq \err (A f_t^*) + \sqrt{\frac{\ln (\frac{9}{\delta})}{2m}}, \text{ and } \\
    \errsam (A \hat{f}_t) &\leq \errsam (g_t^*) \leq \err (g_t^*) + \sqrt{\frac{\ln (\frac{9}{\delta})}{2m}},
\end{align*}
hold separately.
Therefore, by the union bound and the fact that $\Rad_S (A \Hil_t) \leq \Rad_S (\ApproxClass)$ we have with probability at least $1 -\frac{4\delta}{9}$, that \eqref{Af+AgBound} holds.
Similarly, as $\errsam (A \hat{f}_t) \leq \errsam (A f_t^*)$ and by Lemma \ref{lem:error to aproximate error}, we have with probability at least $1 -\frac{\delta}{9}$, that
\begin{align}
    \errsam (A \hat{f}_t) \leq \errsam (A f_t^*) &\leq \err (A f_t^*) + \sqrt{\frac{\ln (\frac{9}{\delta})}{2m}}
    \leq \err (f_t^*) + \rho \distortion^{1} (f_t^*) + \sqrt{\frac{\ln (\frac{9}{\delta})}{2m}}\\ 
    &\leq \err (f_t^*) + \rho t + \sqrt{\frac{\ln (\frac{9}{\delta})}{2m}}.\label{e2}
\end{align}
Combining the above three inequalities and the fact that $\Rad_S (A \Hil_t) \leq \Rad_S (\ApproxClass)$ we have with probability at least $1 -\frac{2\delta}{9}$, that
\begin{align}
    \err (A \hat{f}_t) \leq \err (f_t^*) + 2 \rho t + 2 \rho \Rad_S (\ApproxClass) + 4 \sqrt{\frac{\ln (\frac{9}{\delta})}{2m}}. \label{e3}
\end{align}
This proves \eqref{AfBound}.
The second part follows 
by using Lemma \ref{lem:error to aproximate error}, Jensen's inequality and $\hat{f}_t \in \Hil_t$, so we have 
\[
    \err (\hat{f}_t) \leq \rho \distortion^{1} (\hat{f}_t) + \err (A \hat{f}_t) \leq \rho \distortion (\hat{f}_t) + \err (A \hat{f}_t) \leq \rho t + \err (A \hat{f}_t).
\]  
Taking the union bound for each of the equations completes the proof.
\end{proof}

%

Next we show that in this formulation we can relax the fixed parameter $t$ constraining the function class, without the use of SRM. To avoid clutter, here we suppose the functional form of $f\rightarrow \distortion(f)$ is know -- again, this can be estimated from a disjoint unlabelled data set as in the previous section. In addition, for the case when we only care about the the approximate function $A\hat{f}$, the next subsection will also provide an alternative that does not necessitate additional unlabelled data. 

To this end, consider the following algorithm. 
\begin{equation}
\hat{f} \coloneqq \argmin_{f\in \Hil} \{ \errsam (A f) + \rho \distortion (f) \} \label{AlgoWithKnownsensitivity}
\end{equation}
Here the first term is our modified loss function as before, and the second term acts as a regulariser that implicitly constrains the function class. The following result shows that $\hat{f}$ returned by this algorithm behaves as the previous algorithm \eqref{algo1},
while it also automatically adapts the class-constraining sensitivity threshold $t$.

\begin{prop} \label{GuaranteeWithKnownsensitivityFunction}
Fix an approximation operator $A$. For $t>0$, let $f_t^*:=\argmin_{f\in\Hil_t}\{ \err(f) \}$.
For the function $\hat{f}$ returned by the learning algorithm given in \eqref{AlgoWithKnownsensitivity}, with probability at least $1-\delta$ we have both of the following 
\begin{align}
    \err(A \hat{f})&\leq  \inf_{t>0}\left\{\err (f^*_t) + 2 \rho t\right\} 
    + 2 \rho \Rad_S (\ApproxClass) + 4 \sqrt{\frac{\ln (\frac{8}{\delta})}{2m}}, \text{~and} \label{part_2} \\
    \err (\hat{f})& \leq \inf_{t>0}\left\{\err (f^*_t) + 2 \rho t\right\} 
    + 2 \rho \Rad_S (\ApproxClass) + 4 \sqrt{\frac{\ln (\frac{8}{\delta})}{2m}}, \label{part_1}
\end{align}
simultaneously.
\end{prop}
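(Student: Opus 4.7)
The plan is to derive both bounds in parallel through a single chain of inequalities that rests on exactly two high-probability events that do not depend on $t$. Fix any $t>0$ and write $f_t^* \in \Hil_t$; I will take the infimum over $t$ only at the very end.

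The first step is a standard one-sided Rademacher bound with Talagrand contraction applied to the class $A\Hil \subseteq \ApproxClass$, giving
\[
\err(A\hat{f}) \le \errsam(A\hat{f}) + 2\rho\Rad_S(\ApproxClass) + 3\sqrt{\frac{\ln(2/\delta_1)}{2m}}
\]
with probability at least $1-\delta_1$. The second step is purely deterministic: the optimality of $\hat{f}$ for the regularised empirical objective in \eqref{AlgoWithKnownsensitivity} gives
\[
\errsam(A\hat{f}) + \rho\distortion(\hat{f}) \le \errsam(Af_t^*) + \rho\distortion(f_t^*) \le \errsam(Af_t^*) + \rho t,
\]
since $f_t^* \in \Hil_t$. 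H\"offding's inequality applied to the fixed function $Af_t^*$ gives $\errsam(Af_t^*) \le \err(Af_t^*) + \sqrt{\ln(1/\delta_2)/(2m)}$ with probability at least $1-\delta_2$, and Lemma \ref{lem:error to aproximate error} converts this into $\err(Af_t^*) \le \err(f_t^*) + \rho t$. Dropping the non-negative $\rho\distortion(\hat{f})$ on the left-hand side of the optimality inequality and chaining everything yields \eqref{part_2} for this specific $t$.

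For \eqref{part_1}, I apply Lemma \ref{lem:error to aproximate error} once more, now to $\hat{f}$ itself, writing $\err(\hat{f}) \le \err(A\hat{f}) + \rho\distortion(\hat{f})$. Substituting the Rademacher bound for $\err(A\hat{f})$, and then recognising $\errsam(A\hat{f}) + \rho\distortion(\hat{f})$ as precisely the quantity controlled by the optimality of $\hat{f}$, the remainder of the chain (H\"offding for $Af_t^*$ and Lemma \ref{lem:error to aproximate error}) is identical to the previous case. The crucial observation is that both bounds rely on the \emph{same} two concentration events (Rademacher for $A\Hil$ and H\"offding for $Af_t^*$), so a single union bound establishes both conclusions simultaneously. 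Setting $\delta_1 = \delta/4$ and $\delta_2 = \delta/8$ (so that $\delta_1 + \delta_2 \le \delta$) turns the concentration contributions into $3\sqrt{\ln(8/\delta)/(2m)} + \sqrt{\ln(8/\delta)/(2m)} = 4\sqrt{\ln(8/\delta)/(2m)}$, matching the stated form.

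Finally, to pass to the infimum over $t$, observe that the left-hand sides do not depend on $t$ and the two concentration events do not depend on $t$ either, so it suffices to apply H\"offding to the single deterministic function $f_{t^{*}}^{*}$, where $t^{*}$ attains (or $\epsilon$-attains) $\inf_{t>0}\{\err(f_t^*) + 2\rho t\}$. This $t^{*}$ depends only on the distribution, not on the sample, so H\"offding remains valid with probability $1-\delta_2$ and no uniform-in-$t$ argument is required. I expect the main obstacle to be purely bookkeeping: coupling the two derivations so that the sensitivity regulariser is actively exploited in the $\err(\hat{f})$ chain (to absorb $\rho\distortion(\hat{f})$ coming from Lemma \ref{lem:error to aproximate error}) while being harmlessly dropped by non-negativity in the $\err(A\hat{f})$ chain, ensuring that both conclusions genuinely follow from the same pair of events without inflating the probability budget.
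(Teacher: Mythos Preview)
Your proposal is correct and follows essentially the same route as the paper: Rademacher bound (with Talagrand contraction) on $\ApproxClass$, Lemma \ref{lem:error to aproximate error} twice (once for $\hat f$, once for $f_t^*$), optimality of $\hat f$ for the regularised objective, and a single application of H\"offding to the distribution-dependent minimiser. The paper merely front-loads what you defer: it defines the minimiser $f^*:=\argmin_{t>0}\{\err(f_t^*)+2\rho\distortion^1(f_t^*)\}$ at the outset and applies H\"offding directly to $Af^*$, whereas you argue for generic $t$ and then specialise to $t^*$; the probability budget and the way the regulariser is kept for \eqref{part_1} and dropped for \eqref{part_2} are identical.
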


\begin{proof}[Proof of Proposition \ref{GuaranteeWithKnownsensitivityFunction}]
Using Lemma \ref{lem:error to aproximate error}, and standard Rademacher bounds, we have with probability at least $1-\frac{\delta}{4}$, that 
\begin{align}
    \err (\hat{f}) &\leq \err (A \hat{f}) + \rho \distortion^{1} (\hat{f}) \label{l0} \\ 
    &\leq \errsam (A \hat{f}) + \rho\distortion^{1} (\hat{f}) + 2 \Rad_S (l \circ \ApproxClass) + 3 \sqrt{\frac{\ln (\frac{2 \cdot 4}{\delta})}{2m}}. \label{l1}
\end{align}
Let $f^*:=\argmin_{t>0} \{ \err (f_t^*) + 2\rho\distortion^{1} (f_t^*) \}$.
Then by the definition of $\hat{f}$ and the Hoeffding bound we obtain with a probability of at least $1- \frac{\delta}{8}$, that 
\begin{equation}
    \errsam (A \hat{f}) + \rho\distortion^{1} (\hat{f}) \leq
    \errsam(A f^*)+\rho\distortion^{1} (f^*)
    \leq \err(A f^*)+\rho\distortion^{1} (f^*)
    +\sqrt{\frac{\ln (\frac{8}{\delta})}{2m}}. \label{l2}
\end{equation}
Then, by Lemma \ref{lem:error to aproximate error} definition of $f^*$ we have 
\[
    \err(A f^*)+\rho\distortion^{1} (f^*) \le \err(f^*) + 2\rho\distortion^{1} (f^*) \le \err(f_t^*)+2\rho\distortion^{1} (f_t^*), 
\]
for all $t>0$.
Hence, $\err(A f^*)+\rho\distortion^{1} (f^*) \le \inf_{t>0}\{\err(A f_t^*) + 2\rho\distortion^{1} (f_t^*) \}$, and substituting into \eqref{l2} yields
\[
    \errsam (A \hat{f}) + \rho\distortion^{1} (\hat{f}) \leq \inf_{t>0} \{\err(A f^*_t)+ 2\rho\distortion^{1} (f^*_t)\}+ \sqrt{\frac{\ln (\frac{8}{\delta})}{2m}}.
\]
with probability at least $1-\frac{\delta}{8}$.
By the Talagrand contraction lemma we have $\Rad_S (l \circ \ApproxClass) \leq \rho \Rad_S (\ApproxClass)$, and so combining with \eqref{l1} and then by a union bound we have with probability at least $1-\frac{\delta}{2}$ that 
\[
    \err (\hat{f}) \leq \inf_{t>0} \{\err(A f^*_t)+ 2\rho\distortion^{1} (f^*_t)\} + \sqrt{\frac{\ln (\frac{8}{\delta})}{2m}}
    + 2 \rho \Rad_S (\ApproxClass) + 4 \sqrt{\frac{\ln (\frac{8}{\delta})}{2m}}.
\]
Noting that $\distortion(f_t^*)\leq t$ completes the proof of \eqref{part_1}. 
Eq. \eqref{part_2} also follows, with probability at least $1-\frac{\delta}{2}$, since $\err(A\hat{f})$ is upper bounded by the RHS of \eqref{l0}, by adding the non-negative term $\rho \distortion (f)$.
\end{proof}

From Proposition \ref{GuaranteeWithKnownsensitivityFunction} we see again that, for any fixed approximation function $A$ such that $\Hil_A$ has smaller complexity than $\Hil$, if the target function has a low sensitivity (i.e. $\distortion(f^*)$ is small), then it is learnable from fewer labels than an arbitrary target from $\Hil$ would be. Of course, there may be learning problems where $f^*$ has low error but high sensitivity for the pre-defined $A$, but the algorithm in \eqref{AlgoWithKnownsensitivity} finds a function that automatically balances between generalisation error and sensitivity.

It is straightforward to use an estimate of $\distortion(f)$ in the algorithm, as the following corollary to Proposition \ref{GuaranteeWithKnownsensitivityFunction}. 
\begin{cor} \label{mainProp}
Fix an approximation operator $A$, and consider the following algorithm.
\begin{align}
    \hat{f} \coloneqq \argmin_{f\in\Hil}\{\errsam(Af)+\rho\distortionSam(f)\}.\label{algoSmart}
\end{align}
Let $\epsilon_u$ such that $\sup_{f\in \Hil} \vert  \distortion(f)-\distortionSam(f)\vert  \le \epsilon_u$ with probability at least $1-\frac{\delta}{2}$ with respect to $D_x^{m_u}$ where $m_u\ge m$ 
For $t>0$, let $f_t^*:=\argmin_{f\in\Hil_t}\{ \err(f) \}$.
Then with probability at least $1-\delta$, the function $\hat{f}$ satisfies both
\begin{align*}
    \err(A \hat{f})&\leq  \min_{t>0}\left\{\err (f^*_t) + 2 \rho t\right\} 
    + 2 \rho \Rad_S (\ApproxClass) + (4+\rho) \sqrt{\frac{\ln (\frac{16}{\delta})}{2m}}+\rho\epsilon_u, \text{~and} \\
    \err (\hat{f})& \leq \min_{t>0}\left\{\err (f^*_t) + 2 \rho t\right\} 
    + 2 \rho \Rad_S (\ApproxClass) + (4+\rho) \sqrt{\frac{\ln (\frac{16}{\delta})}{2m}}+\rho\epsilon_u,
\end{align*}
simultaneously.
\end{cor}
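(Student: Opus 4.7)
The plan is to parallel the proof of Proposition \ref{GuaranteeWithKnownsensitivityFunction}, with the modification that the algorithm now uses $\distortionSam(f)$ in place of $\distortion(f)$. This forces us to control the gap between $\distortion$ and $\distortionSam$ at two distinct places: via the uniform bound $\epsilon_u$ when the argument is the data-dependent $\hat{f}$, and via a pointwise Hoeffding inequality when the argument is the deterministic $f^*_t$.

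To bound $\err(\hat{f})$, I would first chain Lemma \ref{lem:error to aproximate error}, a standard Rademacher bound, and Talagrand's contraction lemma to obtain
\[
\err(\hat{f}) \leq \err(A\hat{f})+\rho\distortion^1(\hat{f}) \leq \errsam(A\hat{f})+\rho\distortion^1(\hat{f})+2\rho\Rad_S(\ApproxClass)+3\sqrt{\tfrac{\ln(16/\delta)}{2m}}
\]
with appropriate probability. Since $\hat{f}$ is sample-dependent, I would then invoke the hypothesis $\sup_{f\in\Hil}\vert\distortion(f)-\distortionSam(f)\vert\leq \epsilon_u$ (which holds with probability $1-\delta/2$) to replace $\distortion^1(\hat{f})$ by $\distortionSam^1(\hat{f})+\epsilon_u$. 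At this point the expression contains $\errsam(A\hat{f})+\rho\distortionSam^1(\hat{f})$, which is exactly the objective minimised by $\hat{f}$ in \eqref{algoSmart}, so for any fixed $t>0$ it is upper-bounded by $\errsam(Af^*_t)+\rho\distortionSam^1(f^*_t)$.

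Because $f^*_t$ is deterministic, I would then apply Hoeffding's inequality pointwise twice: once to give $\errsam(Af^*_t)\leq \err(Af^*_t)+\sqrt{\ln(16/\delta)/(2m)}$, and once to give $\distortionSam^1(f^*_t)\leq \distortion^1(f^*_t)+\sqrt{\ln(16/\delta)/(2m)}$, where the latter uses Assumption \ref{ass:bounded sensitivity} for the requisite boundedness. A final application of Lemma \ref{lem:error to aproximate error} lets me absorb $\err(Af^*_t)\leq \err(f^*_t)+\rho\distortion^1(f^*_t)\leq \err(f^*_t)+\rho t$, with the two $\rho\distortion^1(f^*_t)$ contributions combining into the stated $2\rho t$. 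Taking the infimum over $t>0$ produces the bound on $\err(\hat{f})$, and the companion bound on $\err(A\hat{f})$ follows from the same chain starting from the trivial $\err(A\hat{f})\leq \err(A\hat{f})+\rho\distortion^1(\hat{f})$, so both inequalities hold simultaneously under the same high-probability event.

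The main technical care is in the probability accounting: allocating $\delta/2$ to the unlabelled concentration event defining $\epsilon_u$ and $\delta/8$ each to the Rademacher bound and the two pointwise Hoeffding applications keeps the total failure probability at most $\delta$ and gives rise to the $\ln(16/\delta)$ factor; the coefficient $(4+\rho)=3+1+\rho$ on $\sqrt{\ln(16/\delta)/(2m)}$ then collects directly from the Rademacher term and the two Hoeffding terms, while the single $\rho\epsilon_u$ appears only from the data-dependent transfer at $\hat{f}$. I do not anticipate any deeper obstacle beyond this bookkeeping and the care to distinguish uniform versus pointwise concentration, since the rest of the argument is a direct adaptation of the proof of Proposition \ref{GuaranteeWithKnownsensitivityFunction}.
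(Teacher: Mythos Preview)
Your proposal is correct and follows essentially the same approach as the paper, which merely states that the corollary ``follows by the same steps as the proof of Proposition~\ref{GuaranteeWithKnownsensitivityFunction} combined with Lemma~\ref{lem:error to aproximate error}.'' You have in fact supplied considerably more detail than the paper does, and your distinction between invoking the uniform $\epsilon_u$ event for the data-dependent $\hat{f}$ versus pointwise Hoeffding for the deterministic $f^*_t$ is exactly the right way to recover the stated coefficient $(4+\rho)$ with only a single $\rho\epsilon_u$ term.
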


\begin{proof}
This follows by the same steps as the proof of Proposition \ref{GuaranteeWithKnownsensitivityFunction} combined with Lemma \ref{lem:error to aproximate error}.
\end{proof}

With sufficient unlabelled data $\epsilon_u$ can be made arbitrarily small. Moreover, unlike the approach in the previous section where the function class depends on the unlabelled data through the sensitivity estimate, here the implicit adaptation of $t$ enables reusing the labelled points in the estimating of sensitivity. The advantage of the algorithm analysed in Proposition \ref{BalcanMethod} is statistical consistency, since given enough labelled data the generalisation error converges to that of the best predictor of the class. However, if the goal is to obtain an approximate predictor, we pay the price of an additive sensitivity term \eqref{additiveTerm}, and Corollary \ref{mainProp} shows that allowing such term enables a much more efficient implementation without sacrificing the essence of the theoretical guarantee on generalisation.


\subsection{Managing the trade-off between sample error and sensitivity for the approximate predictor }\label{ImplementableAlgo}
The analyses from Proposition \ref{GuaranteeWithKnownsensitivityFunction} and Corollary \ref{mainProp} have shown that the associated algorithms have an implicit ability to realise the optimal trade-off between the sample error of $A\hat{f}$ and the sensitivity term, $t$, without any effort or tuning parameter from the user.

However, there may be situations when a different trade-off may be wanted and in such a case it may be desirable to manage this trade-off as a tuning parameter. This is especially relevant for practical applications in memory-constrained settings, where obtaining a good approximate predictor $A\hat{f}$ is the sole interest. 
For instance, we may only care about very low sensitivity function at the expense of a slightly raised error, or vice-versa. Or we might like to explore multiple trade-offs as in a bi-objective approach. Another instance of this is when unlabelled data is also scarce but an analytic upper bound can be derived on the sensitivity function up to an unknown constant. 

Conceptually, a good way to address this sort of issues would be to take back control over the threshold parameter $t$ using the learning algorithm in \eqref{algo1} (with or without estimating the sensitivity). However, the constrained optimisation formulation can be awkward to perform in practice. Below we suggest a more user-friendly form of the algorithm, and show that its solution is close to that of \eqref{algo1}.

For each $\lambda \geq 0$ consider the following algorithm
\begin{align}
    \tilde{f}_{\lambda} &\coloneqq \argmin_{f \in \Hil} \{ \errsam (A f) + \lambda \distortionSam (f) \}.\label{algo2}
\end{align}
Algorithms of this form, including the exploitation of unlabelled data in the regularisation term, have been in use in practice for a long time \cite{Chapelle}, see also \cite{semisupSurvey}. While the authors in \cite{Balcan2010} point out that this is not theoretically justified in general, we are able to justify it within our approximability objective. The regularisation parameter $\lambda$ balances the two terms of the objective function, and in addition to potential availability of prior knowledge, there is a wide range of well-established model selection methods available to set this parameter in practice. 

To this end, we shall compare the error of $\tilde{f}_{\lambda}$ from algorithm \eqref{algo2} with that of $\hat{f}_t$ from the algorithm given in \eqref{algo1}.
The following proposition shows that, for any specification of $\lambda$, there is a value of $t>0$ such that the errors of these two predictors are close, up to additive terms that decay with the sample size.

\begin{prop}\label{prop:lambda}
Let $\epsilon_u > 0$ be such that $\sup_{f\in \Hil} \vert  \distortion(f)-\distortionSam(f)\vert  \le \epsilon_u$ with probability at least $1-\delta/4$ with respect to $D_x^{m_u}$ where $m_u\ge m$.
For any $\lambda>0$, there exists $t > 0$ such that with probability at least $1-\delta$ we have
\begin{align}
    \err (A \tilde{f}_{\lambda}) - \err (A \hat{f}_t) & \leq 4 \rho \Rad_S (\ApproxClass) + 6 \sqrt{\frac{\ln (\frac{8}{\delta})}{2m}} + 2\lambda\epsilon_u 
    \label{eqn:True Equiv of implementations}.
\end{align}
\end{prop}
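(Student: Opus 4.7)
The plan is to reduce the population gap $\err(A\tilde{f}_\lambda) - \err(A\hat{f}_t)$ to an empirical gap that can be controlled by the Lagrangian optimality of $\tilde{f}_\lambda$, and then to absorb the residual sensitivity mismatch via the concentration parameter $\epsilon_u$. The natural choice of $t$ is essentially $\distortion(\tilde{f}_\lambda)$, slightly inflated so that $\tilde{f}_\lambda$ becomes a feasible competitor in the constrained problem defining $\hat{f}_t$. Concretely I would set $t = \distortion(\tilde{f}_\lambda) + \eta$ with $\eta \downarrow 0$.

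First I would apply the standard one-sided Rademacher bound (with Talagrand's contraction lemma absorbing the $\rho$-Lipschitz loss) uniformly over $\ApproxClass$, once as $\err(A\tilde{f}_\lambda) \leq \errsam(A\tilde{f}_\lambda) + 2\rho\Rad_S(\ApproxClass) + 3\sqrt{\ln(8/\delta)/(2m)}$ and once in the reverse direction for $A\hat{f}_t$. Combining the two on their joint event yields
\[
\err(A\tilde{f}_\lambda) - \err(A\hat{f}_t) \leq \bigl(\errsam(A\tilde{f}_\lambda) - \errsam(A\hat{f}_t)\bigr) + 4\rho\Rad_S(\ApproxClass) + 6\sqrt{\ln(8/\delta)/(2m)}.
\]

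Second I would invoke the Lagrangian optimality of $\tilde{f}_\lambda$ applied to the competitor $\hat{f}_t \in \Hil$, which rearranges to
\[
\errsam(A\tilde{f}_\lambda) - \errsam(A\hat{f}_t) \leq \lambda\bigl(\distortionSam(\hat{f}_t) - \distortionSam(\tilde{f}_\lambda)\bigr).
\]
On the $\epsilon_u$ event, using $\hat{f}_t \in \Hil_t$ to get $\distortionSam(\hat{f}_t) < t + \epsilon_u$ and using $\distortionSam(\tilde{f}_\lambda) \geq \distortion(\tilde{f}_\lambda) - \epsilon_u$, the sensitivity gap is strictly smaller than $(t - \distortion(\tilde{f}_\lambda)) + 2\epsilon_u = \eta + 2\epsilon_u$. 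Letting $\eta \downarrow 0$ bounds the empirical gap by $2\lambda\epsilon_u$.

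The main subtlety I anticipate is the strict inequality in the definition $\Hil_t = \{f : \distortion(f) < t\}$, which prevents the clean choice $t = \distortion(\tilde{f}_\lambda)$ and necessitates the small inflation $\eta$ together with a limiting argument to achieve the factor of exactly $2$ in front of $\lambda\epsilon_u$. Once this is handled, the rest is assembled by a union bound over the two-sided Rademacher deviation event and the $\epsilon_u$ event, yielding total confidence at least $1-\delta$ with the stated bound.
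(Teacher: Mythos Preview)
Your approach is essentially the same as the paper's: decompose the population gap via two-sided Rademacher bounds over $\ApproxClass$, then control the empirical gap by the Lagrangian optimality of $\tilde{f}_\lambda$ against the competitor $\hat{f}_t$, and finally absorb the sensitivity mismatch using the $\epsilon_u$ event. All of that is correct.

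The one point worth flagging is your choice of $t$ and the limiting in $\eta$. You motivate the inflation $t=\distortion(\tilde f_\lambda)+\eta$ by saying it makes $\tilde f_\lambda$ feasible in $\Hil_t$, but you never actually use that feasibility: your argument only plugs $\hat f_t$ into the unconstrained Lagrangian problem, which is valid for any $t$. Consequently the limiting step is both unnecessary and slightly awkward, since as $\eta\downarrow 0$ the comparator $\hat f_{t_\eta}$ changes and you do not exhibit a single $t$ achieving the exact constant $2\lambda\epsilon_u$. The paper sidesteps this by choosing $t\le \distortion(\tilde f_\lambda)$: then automatically $\distortion(\hat f_t)\le t\le \distortion(\tilde f_\lambda)$, and one obtains
\[
\errsam(A\tilde f_\lambda)-\errsam(A\hat f_t)\le \lambda\bigl(\distortion(\tilde f_\lambda)-\distortionSam(\tilde f_\lambda)\bigr)+\lambda\epsilon_u\le 2\lambda\epsilon_u
\]
directly, with no limit. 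In fact your own chain already works verbatim with $\eta=0$ (i.e.\ $t=\distortion(\tilde f_\lambda)$), since you only need $\distortion(\hat f_t)<t$ from $\hat f_t\in\Hil_t$, not $\tilde f_\lambda\in\Hil_t$. So the fix is simply to drop the inflation.
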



\begin{proof}[Proof of Proposition \ref{prop:lambda}]
Take 
$    t \leq \distortion (\tilde{f}_{\lambda})$. Then from the definition of algorithm \eqref{algo1} we have 
$\distortion (\hat{f}_t) \leq t \leq \distortion (\tilde{f}_{\lambda})$.
Using this, the definition of $\tilde{f}_{\lambda}$, and Lemma \ref{lem:unlabled error bound}, it follows with probability at least $1-\frac{\delta}{4}$ that
\begin{align*}
    \errsam (A \tilde{f}_{\lambda}) + \lambda \distortionSam (\tilde{f}_{\lambda}) & \leq \errsam (A \hat{f}_t) + \lambda \distortionSam (\hat{f}_t) \\
    & \leq \errsam (A \hat{f}_t) + \lambda \distortion (\hat{f}_t) + \lambda \epsilon_u 
    \\
    & \leq \errsam (A \hat{f}_t) + \lambda \distortion (\tilde{f}_{\lambda}) + \lambda \epsilon_u. 
\end{align*}
Rearranging, and using Lemma \ref{lem:unlabled error bound} again, we have with probability at least $1-\delta/2$ that
\[
    \errsam (A \tilde{f}_{\lambda}) - \errsam (A \hat{f}_t) \leq \lambda (\distortion (\tilde{f}_{\lambda}) - \distortionSam (\tilde{f}_{\lambda})) + \lambda\epsilon_u 
    \leq 2\lambda\epsilon_u. 
    \label{eqn:Sample Equiv of implementations}
\]
This shows that the sample errors of the two predictors are close. 

Now, to prove \eqref{eqn:True Equiv of implementations} we use standard Rademacher bounds with probability at least $1 - \delta/2$ on $\ApproxClass$ twice, combined with \eqref{eqn:Sample Equiv of implementations} and the union bound, we have with probability $1-\delta$, 
\begin{align*}
    \err (A \tilde{f}_{\lambda}) - \err (A \hat{f}_t) & = (\err (A \tilde{f}_{\lambda}) - \errsam (A \tilde{f}_{\lambda})) + (\errsam (A \tilde{f}_{\lambda})) - \errsam (A \hat{f}_t))\\&\hspace{3cm} + {(\errsam (A \hat{f}_t) - \err (A \hat{f}_t))} \\
    & \leq 2\lambda\epsilon_u 
    + {2} \left(2 \rho \Rad_S (\ApproxClass) + 3 \sqrt{\frac{\ln (\frac{8}{\delta})}{2m}}\right),
\end{align*}
as required.
\end{proof}

Finally, we now address the case when instead of estimating the sensitivity from unlabelled data we have an analytic upper bound on this function, in the case of some specific choice of function class and approximation operator, up to some unknown absolute constant. The constant will be subsumed into the tuning parameter $\lambda$.

Let $\overline{\distortion}(\cdot)$ be a mapping from $\Hil$ to $\R_+$ where there exists $c>0$ such that for all $f \in \Hil$, we have $\distortion(f)\le c\cdot\overline{\distortion}(f)$.
Note that, $\overline{\distortion}(\cdot)$ does not depend on the sample.
Now, for each $\lambda \geq 0$ define the following algorithm
\begin{equation} \label{algo2a}
    \overline{f}_{\lambda} \coloneqq \argmin_{f \in \Hil} \{ \errsam (A f) + \lambda \overline{\distortion}(f) \}.
\end{equation}

Furthermore, let $\hat{f}_t$ be the predictor returned by algorithm \eqref{algo1}, and $\overline{f}_t$ the predictor from a version of the same algorithm that uses $\overline{\distortion}(\cdot)$ in place of the unknown $\distortion(\cdot)$. Then $\overline{f}_t$ will have a guarantee of the same form as before in Proposition \ref{prop:generalisation bound knowing t} where $t$ is now a threshold on $\overline{\distortion}(\cdot)$ rather than ${\distortion}(\cdot)$.
The following proposition shows that the error of  $\overline{f}_{\lambda}$ is close to that of $\overline{f}_t$.

\begin{prop}\label{prop:overline}
For any $\lambda>0$, there exists $t > 0$ such that with probability at least $1-\delta$ we have
\begin{equation}
    \err(A\overline{f}_{\lambda})-\err(A\overline{f}_{t}) \leq 
    4 \rho \Rad_S (\ApproxClass) + 6 \sqrt{\frac{\ln (\frac{8}{\delta})}{2m}}.    
\end{equation}
\end{prop}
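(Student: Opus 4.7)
The plan is to mirror the proof of Proposition \ref{prop:lambda}, while exploiting the crucial simplification that $\overline{\distortion}(\cdot)$ is sample-independent: this means no Lemma~\ref{lem:unlabled error bound} is invoked, no unlabelled-sample assumption is needed, and the $\epsilon_u$ slack disappears entirely. As a result, the final additive gap is purely a uniform-convergence term on the approximated class $\ApproxClass$, which is where all three terms $4\rho\Rad_S(\ApproxClass)$, $6\sqrt{\ln(8/\delta)/(2m)}$ come from.

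\textbf{Choice of $t$ and comparison of sample errors.} First I would set $t=\overline{\distortion}(\overline{f}_\lambda)$, so that by definition of the constrained algorithm the minimiser $\overline{f}_t$ satisfies $\overline{\distortion}(\overline{f}_t)\le t=\overline{\distortion}(\overline{f}_\lambda)$. Next, using the optimality of $\overline{f}_\lambda$ in \eqref{algo2a} applied to the competitor $\overline{f}_t\in\Hil$,
\[
\errsam(A\overline{f}_\lambda)+\lambda\overline{\distortion}(\overline{f}_\lambda)\;\le\;\errsam(A\overline{f}_t)+\lambda\overline{\distortion}(\overline{f}_t),
\]
and cancelling the nonnegative quantity $\lambda\bigl(\overline{\distortion}(\overline{f}_\lambda)-\overline{\distortion}(\overline{f}_t)\bigr)$ yields the clean, deterministic inequality
\[
\errsam(A\overline{f}_\lambda)\;\le\;\errsam(A\overline{f}_t).
\]
This is the analogue of the sample-error comparison step in the proof of Proposition \ref{prop:lambda}, but now without any $\epsilon_u$ correction, because $\overline{\distortion}$ does not depend on $S$.

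\textbf{From sample error to generalisation error.} I would decompose
\[
\err(A\overline{f}_\lambda)-\err(A\overline{f}_t)\;=\;\bigl[\err(A\overline{f}_\lambda)-\errsam(A\overline{f}_\lambda)\bigr]+\bigl[\errsam(A\overline{f}_\lambda)-\errsam(A\overline{f}_t)\bigr]+\bigl[\errsam(A\overline{f}_t)-\err(A\overline{f}_t)\bigr].
\]
The middle bracket is nonpositive by the preceding step. Each of the outer brackets is controlled by a standard uniform Rademacher bound on $l\circ\ApproxClass$ together with Talagrand's contraction lemma $\Rad_S(l\circ\ApproxClass)\le\rho\Rad_S(\ApproxClass)$, yielding $2\rho\Rad_S(\ApproxClass)+3\sqrt{\ln(8/\delta)/(2m)}$ each with probability at least $1-\delta/4$. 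A union bound over the two events (no other events are needed since $t$ and the sample-error comparison are deterministic) then gives the claimed inequality with total probability at least $1-\delta$.

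\textbf{Main obstacle.} Given the template of Proposition \ref{prop:lambda}, there is essentially no technical obstacle here; the content of the statement is the observation that replacing the data-dependent $\distortion(\cdot)$ with a sample-independent upper surrogate $\overline{\distortion}(\cdot)$ eliminates the $2\lambda\epsilon_u$ term entirely. The one detail to handle carefully is the choice $t=\overline{\distortion}(\overline{f}_\lambda)$, ensuring that $\overline{f}_t$ is a legitimate competitor for $\overline{f}_\lambda$ (i.e.\ that $\overline{\distortion}(\overline{f}_t)\le\overline{\distortion}(\overline{f}_\lambda)$), which in turn lets the $\lambda\overline{\distortion}$ terms cancel in the right direction.
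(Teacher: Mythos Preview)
Your proposal is correct and follows essentially the same approach as the paper's proof: choose $t$ so that $\overline{\distortion}(\overline{f}_t)\le t\le\overline{\distortion}(\overline{f}_\lambda)$ (you take $t=\overline{\distortion}(\overline{f}_\lambda)$ explicitly, the paper states the interval), use the optimality of $\overline{f}_\lambda$ to deduce $\errsam(A\overline{f}_\lambda)\le\errsam(A\overline{f}_t)$, then apply two uniform Rademacher bounds on $\ApproxClass$ to the telescoping decomposition. Your observation that the sample-independence of $\overline{\distortion}$ is precisely what removes the $\epsilon_u$ term is exactly the point of the proposition.
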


\begin{proof}
Take $t>0$ such that $\overline{\distortion}(\overline{f}_t) \leq t \leq\overline{\distortion}(\overline{f}_{\lambda})$. 
Consequently, by the definition of $\overline{f}_{\lambda}$, we have
\begin{equation*}
    \errsam(A\overline{f}_{\lambda})+\lambda\overline{\distortion}(\overline{f}_{\lambda}) \leq \errsam(A\overline{f}_{t})+\lambda\overline{\distortion}(\overline{f}_{t}) \leq \errsam(A\overline{f}_{t})+\lambda\overline{\distortion}(\overline{f}_{\lambda}).
\end{equation*}
Therefore, $\errsam (A \overline{f}_{\lambda})) \leq \errsam (A \overline{f}_t)$.
Using this, we have 
\begin{align*}
    \err (A \overline{f}_{\lambda}) - \err (A \overline{f}_t) & = (\err (A \overline{f}_{\lambda}) - \errsam (A \overline{f}_{\lambda})) + (\errsam (A \overline{f}_{\lambda})) - \errsam (A \overline{f}_t))\\&\hspace{3cm} + {(\errsam (A \overline{f}_t) - \err (A \overline{f}_t))} \\
    & \leq  {2} \left(2 \rho \Rad_S (\ApproxClass) + 3 \sqrt{\frac{\ln (\frac{8}{\delta})}{2m}}\right).
\end{align*}
with probability at least $1-\delta$, by standard Rademacher bounds applied to the class $\ApproxClass$ twice. 
\end{proof}

We should note that Propositions \ref{prop:lambda} and \ref{prop:overline} require that $\lambda$ is specified before seeing the data. However, we can use SRM to allow an exploration of a countable number of different values for this parameter before making this choice for a small additional error term. Specifically, take a sequence of candidate values $\{\lambda_k\}_{k\in\N}$ weighted by $\{w_k\}_{k\in\N}$ with $\sum_{k\in\N}w_k\le 1$. Then the same bounds hold for all $\lambda_k$, where $k\in\N$, simultaneously at the expense of an additional term of $3\sqrt{\frac{\log(1/w_k)}{2m}}$.

\if 0 

We now look at how the generalisation error of the approximate predictor compares with the original predictor.
\begin{prop} \label{prop:Generalisation bound knowing t for approximated predictor} \textcolor{red}{Moved this into the previous Prop as a second statement?}
Let $t>0$.
Then define $f_t^* \coloneqq \argmin_{f \in \Hil_t} \{ \err (f) \}$.
Then with probability of at least $1-\delta$, we have
\[
    \err (A \hat{f}_t) \leq \err (f_t^*) + \rho t + 2 \rho \Rad_S (\ApproxClass) + 3 \sqrt{\frac{\ln (\frac{16}{\delta})}{2m}},
\]
where we use the algorithm $\hat{f}_t \coloneqq \argmin_{f \in \Hil_t} \{ \errsam (A f) \}$.
\end{prop}

\begin{proof}
By the standard Rademacher bounds, the definition of $\hat{f}_t$, Talagrand's contraction lemma, H\"offding inequality and Lemma \ref{lem:error to aproximate error}, we have with at least $1-\delta$, that
\begin{align*}
    \err (A \hat{f}_t) & \leq \errsam (A \hat{f}_t) + \Rad_S (l \circ A \Hil_t) + 3 \sqrt{\frac{\ln (\frac{16}{\delta})}{2m}} \\
    & \leq \errsam (A f_t^*) + 2 \rho \Rad_S (A \Hil_t) + 3 \sqrt{\frac{\ln (\frac{16}{\delta})}{2m}} \\
    & \leq \err (A f_t^*) +  2 \rho \Rad_S (\ApproxClass) + 4 \sqrt{\frac{\ln (\frac{16}{\delta})}{2m}} \\ 
    & \leq \err (f_t^*) + \rho \distortion (f_t^*) + 2 \rho \Rad_S (\ApproxClass) + 4 \sqrt{\frac{\ln (\frac{16}{\delta})}{2m}} \\
    & \leq \err (f_t^*) + \rho t + 2 \rho \Rad_S (\ApproxClass) + 4 \sqrt{\frac{\ln (\frac{16}{\delta})}{2m}}.
\end{align*}
This completes the proof.
\end{proof}
\fi

\section{Rademacher complexity of the class of sensitivities}\label{diffClass}

The generalisation bounds of Section \ref{sec:2} that include estimated values of the sensitivity, rely on the Rademacher complexity of the class of sensitivities $\Rad_S (\distortion \Hil)$. Arguably, this set can be at least as large as the original function class $\Hil$ in the worst case, so one may wonder whether the bounds are actually useful. In this section we look at this quantity more closely. 

Indeed, using the basic properties of the Rademacher complexity gives 
\begin{equation} \label{eqn:Crude bound 1}
    \Rad_{S} (\distortion \Hil) \leq \Rad_{S} (\Hil) + \Rad_{S} (\ApproxClass). 
\end{equation}
Moreover, this bound is tight, since equality holds when the approximating class $\ApproxClass$ is a singleton -- however, the use of a singleton $\ApproxClass$ is quite contrived, and far from what approximate algorithms are designed for.

For a fixed (possibly unlabelled) sample $S$, the set of interest in this section is the restriction of $\distortion\Hil$ to $S$,
\[
    \distortion \Hil \vert_{S} \coloneqq \left\{ \begin{pmatrix} \vert   f(x_1) - Af (x_1) \vert   \\ \vdots \\ \vert   f(x_m) - Af (x_m) \vert   \end{pmatrix} \colon f \in \Hil \right\}
\]
As before, we denoted $R_p \coloneqq \sup_{f \in \Hil}\distortionSam^p (f)$, the worst sensitivity in the chosen $p$-norm on the sample $S$. 
Note that from Assumption \ref{ass:bounded sensitivity} we have $R_p \leq C$ for all $p>0$.
We shall also use the shorthand
\[
    \x_k=\x(x_k)=\vert  f(x_k)-Af(x_k)\vert   \text{ and } \x=(\x_k)_{k\in[m]}.
\]
Note that $\distortion \Hil \vert_{S} \subseteq B_p(0,m^{1/p}R_p)$ for all $p\ge 1$,
where $\B_p(c,r)$ denotes the $p$-ball centered at $c$ with radius $r$. 

We start by putting a crude magnitude bound on $\Rad_{S} (\distortion \Hil)$, which holds irrespective of the choices of $\Hil$ and $\ApproxClass$ and is tight up to a constant factor.
The following proposition shows that, whenever we have a good approximation on the sample for all predictors in $\Hil$, the Rademacher complexity of the sensitivity class must be small in magnitude, and this bound is also tight up to a constant factor, for all choices of $p\ge 1$.
This magnitude bound will not imply a decay as $m$ increases, as we make no assumptions beyond an i.i.d. sample at this point. However this magnitude bound will be a useful reference in our later subsections, and it can also be taken in conjunction with other bounds, since one can always take the minimum of all upper bounds. 

\begin{prop}[Crude magnitude bound] \label{prop:Crude2}
For any $p\ge 1$, we have
\[
    \Rad_S (\distortion \Hil) \leq R_p
\]
Moreover, given $p$ as chosen above, suppose that $\distortion \Hil \vert_S$ nearly fills the $p$-ball of radius $R_p m^{1/p}$, in the sense that the convex hull of $\distortion \Hil \vert_S$ contains the {$p$-ball of radius $\frac{m^{1/p}}{2} R_p$ intersected with the positive orthant.} 
Then there exists a constant $C_p>0$  that only depends on the choice of $p$-norm, such that 
\[
    \Rad_S (\distortion \Hil)\geq C_p\cdot R_p.
\]
\end{prop}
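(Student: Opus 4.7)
The plan is straightforward: derive the upper bound by Hölder's inequality applied pointwise in the Rademacher signs, and the lower bound by passing to the convex hull of $\distortion\Hil|_S$ and exploiting the assumed geometric inclusion. Both directions reduce to Hölder, once in its inequality form and once in its equality form.

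For the upper bound, fix any $\sigma \in \{\pm 1\}^m$ and $f \in \Hil$. Writing $\x(f) = (|f(x_i) - Af(x_i)|)_{i \in [m]}$ and letting $q$ be the Hölder conjugate of $p$, Hölder yields
\[
    \sum_{i=1}^m \sigma_i \x_i(f) \le \|\sigma\|_q \, \|\x(f)\|_p = m^{1/q} \cdot m^{1/p} \distortionSam^p(f) \le m \, R_p,
\]
since $1/p + 1/q = 1$. Dividing by $m$ and taking $\sup_{f \in \Hil}$ and $\E_\sigma$ gives $\Rad_S(\distortion\Hil) \le R_p$.

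For the lower bound, I would first use that $\x \mapsto \tfrac{1}{m}\sum_i \sigma_i \x_i$ is linear, so the supremum over $\distortion\Hil|_S$ equals the supremum over its convex hull, which by assumption contains $\B_p(0, m^{1/p}R_p/2) \cap \R_+^m$. Hence
\[
    \Rad_S(\distortion\Hil) \;\ge\; \frac{1}{m} \E_\sigma \sup_{\substack{\x \ge 0 \\ \|\x\|_p \le m^{1/p} R_p/2}} \sum_{i=1}^m \sigma_i \x_i.
\]
For a fixed sign pattern, the non-negativity constraint forces the optimum to vanish on $I_- := \{i : \sigma_i = -1\}$, so the inner problem reduces to maximising $\sum_{i \in I_+} \x_i$ over $\x \ge 0$ supported on $I_+ := \{i : \sigma_i = +1\}$ with $\|\x\|_p \le m^{1/p} R_p/2$. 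The equality case of Hölder identifies the inner sup as $|I_+|^{1/q} \cdot m^{1/p} R_p / 2$, attained by a uniform vector on $I_+$.

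It then remains to lower bound $\E_\sigma |I_+|^{1/q}$. When $p=1$ (so $q=\infty$), $|I_+|^{1/q}=1$ whenever $|I_+|\ge 1$, giving $\Rad_S \ge R_1/2$ essentially directly. For $p\in(1,\infty)$, I would use the one-sided tail bound $\Pr[|I_+| \ge m/2] \ge 1/2$ to obtain $\E_\sigma |I_+|^{1/q} \ge \tfrac{1}{2}(m/2)^{1/q}$; combined with $1/p + 1/q = 1$, the powers of $m$ cancel and one gets $\Rad_S(\distortion\Hil) \ge C_p R_p$ with $C_p = 1/2^{2+1/q}$ depending only on $p$. The one mildly subtle point is this last step: Jensen's inequality goes the wrong way on the concave map $n \mapsto n^{1/q}$, so a one-sided tail (or median) argument is essential; everything else is Hölder.
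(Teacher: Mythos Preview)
Your proof is correct and follows the same overall strategy as the paper: H\"older for the upper bound, then passage to the convex hull plus H\"older duality for the lower bound. Two differences are worth noting. First, to handle the non-negativity constraint on $\x$ the paper invokes Moreau's decomposition theorem for the cone $K_+$, whereas you simply observe that the optimiser must vanish on $I_-$; your route is more elementary and just as rigorous. Second, and more substantively, the paper writes $\E_\sigma \|\Pi_{K_+}\sigma\|_{p'} = (m/2)^{1/p'}$ as an equality, but since $\|\Pi_{K_+}\sigma\|_{p'}=|I_+|^{1/p'}$ this is in fact only an \emph{upper} bound (Jensen on the concave map $n\mapsto n^{1/p'}$), which goes the wrong way for a lower bound. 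You correctly flag this and substitute the tail estimate $\Pr[|I_+|\ge m/2]\ge 1/2$, obtaining the fully justified constant $C_p = 2^{-2-1/q}$; the paper's stated constant $2^{-1-1/p}$ is not actually established by its own argument. So your version is both simpler and, on the expectation step, more careful than the paper's.
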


\begin{proof}
By H\"older's inequality,   
\begin{align*}
    \Rad_{S} (\distortion \Hil) & = \frac{1}{m} \E_{\sigma} \sup_{f \in \Hil} \sum_{k=1}^{m} \sigma_k \vert   f(x_k) - Af (x_k) \vert   \\
    & \leq \frac{1}{m} \sup_{f \in \Hil} \sum_{k=1}^{m} \vert   f(x_k) - Af (x_k) \vert   \\
    & \leq \sup_{f \in \Hil} \left( \frac{1}{m} \sum_{k=1}^{m} \vert   f(x_k) - Af (x_k) \vert  ^p \right)^{\frac{1}{p}},\\
    & = \sup_{f \in \Hil} \distortionSam^{p}(f) = R_p
\end{align*} 
for all $p \in [1, \infty)$. 
This proves the upper bound.

We denote by $K_+$ the positive orthant, and 
let $\B_p^+\left(0,{\frac{m^{1/p}}{2} R_p}\right):= K_+ \cap \B_p \left(0,{\frac{m^{1/p}}{2} R_p}\right)$. To prove the lower bound, we recall Moreau's decomposition theorem \cite{Moreau} (see also \cite[Sec. 2.1 \& Sec. 3.1.5]{WeiWainwright}), which is the following:
Given a closed convex cone $K\subset \R^m$, denote its polar cone by $K^*=\{u\in\R^m : \langle u,u' \rangle\leq 0\text{~for all~} u'\in K\}$. Then, every vector $v\in\R^m$ can be decomposed as
\begin{align} 
v=\Pi_{K}(v)+\Pi_{K^*}(v)\text{~such that~} \langle \Pi_{K}(v),\Pi_{K^*}(v)\rangle = 0, \label{Moreau}
\end{align}
where $\Pi_{K}(u) \coloneqq \argmin_{u'\in K}\|  u-u' \|  _2$ is the orthogonal projection of $u$ into $K$. Hence we have
\begin{align}
    \Rad_S (\distortion \Hil) & = \frac{1}{m} \E_{\sigma} \sup_{f \in \Hil} \sum_{k=1}^m \sigma_k \vert   f(x_m) - Af (x_m) \vert   \\
    & = \frac{1}{m} \E_{\sigma} \sup_{\x \in \text{conv} (\distortion \Hil \vert_S)} \sum_{k=1}^m \sigma_k \x_k \\
    &\geq \frac{1}{m} \E_{\sigma} \sup_{\x \in \B_p^+\left(0,\frac{m^{1/p}}{2} R_p\right)} \sum_{k=1}^m \sigma_k \x_k \\
        &= \frac{1}{m} \E_{\sigma} \sup_{\x \in \B_p^+\left(0,\frac{m^{1/p}}{2} R_p\right)} \x^T(\Pi_{K_+}(\sigma)+\Pi_{K_+^*}(\sigma))  \label{proj} \\
    &=\frac{1}{m}\cdot\frac{m^{1/p}}{2}R_p\cdot\E_{\sigma} \|  \Pi_{K_+} \sigma\|  _{p'} \label{eq20inWeiWainwright}\\
    &=\frac{1}{m}\cdot\frac{m^{1/p}}{2}R_p \cdot \left(\frac{m}{2}\right)^{1/p'} \\ 
    &=m^{1/p+1/p'-1}\cdot 2^{-1-1/p} \cdot R_p =  \frac{R_p}{2\sqrt[p]{2}}.
\end{align}
where $p'$ is the H\"older conjugate of $p$, i.e. $1/p+1/p'=1$. In line \eqref{proj} we applied \eqref{Moreau} to $\sigma$, and \eqref{eq20inWeiWainwright} follows from the fact that $\x$ is in the positive orthant $K_+$ so $\langle u,\Pi_{K_+^*}(\sigma)\rangle \leq 0$ 
and due to the supremum equality is attained when $\x$ is a nonnegative scalar multiple of $\Pi_{K_+}(\sigma)$ -- in which case $\langle u,\Pi_{K_+^*(\sigma)}\rangle=0$. This completes the proof of the lower bound.
\end{proof}

The lower bound highlights the fact that one cannot tighten the complexity bound by more than a constant factor without making extra assumptions. In addition, we also see that non-negativity of the elements of $\distortion\Hil$ only affect this constant. 
Therefore in the next few sections we set out to find and exploit other structures in order to gain more transparency and insight on the effective magnitude of this quantity in some natural settings. Specifically, we shall discuss examples of some non-restrictive structural models from which one can read off benign conditions that give better bounds on $\Rad_S (\distortion \Hil)$. A lower magnitude of this complexity implies a smaller unlabelled data set size requirement for accurate estimation of the sensitivity, and in the case of our bounds in Sections \ref{noBalcan} and \ref{ImplementableAlgo} this may even permit solving the learning problem without the need of an additional unlabelled sample.

\subsection{Exploiting structural models of the sensitivity set}

Throughout this section we make no assumption about neither the function class $\Hil$ nor the approximating class $\ApproxClass$. So the results of this section are equally relevant to very rich classes like deep neural networks, all the way to very restricted ones like linear classes. We also make no assumption about the form of the approximating function, and indeed the approximating class is not required to be of the same architectural type as the original class.

We demonstrate the benign effects of some structural traits that the set $\distortion \Hil$  may naturally exhibit regardless of the linear or nonlinear nature of the actual predictors. Such benign structures will manifest themselves by explaining a reduced complexity $\Rad(\distortion \Hil)$ -- which in turn allow the bounds of Section \ref{sec:2} to provide a better understanding of what makes some instances of a learning problem easier than others.

Our strategy in the next subsections will be to study the complexity of the set $\distortion \Hil$ restricted to the sample $S$ (as it appears in the empirical Rademacher bounds presented in Section \ref{sec:2}) by inscribing it into various parametrised geometric shapes. These include natural structures such as the points of $\distortion\Hil_{\vert  S}$ being near-sparse, or exhibiting clusters, or having some structured sparsity type model. For this we will not actually impose any extra conditions, instead our strategy is to use these constructs to reveal how the Rademacher complexity depends on the parameters of these models. In other words, our bounds will always hold with some parameter values, as in the worst case we just recover the crude magnitude bound in Proposition  \ref{prop:Crude2}, while at the same time the effects of parameters convey more insight.

\subsubsection{Near-sparse sensitivity set}

A very natural situation is when some points in $S$ have little effect on the sensitivity of the approximation, or in other words the approximation has little effect on the predictions of part of the points of $S$. For instance in classification, points that are far from the boundary will often have the approximating function $Af$  predict in agreement with the original $f$. 

A simple way to model this situation is by having the vectors in $\distortion \Hil \vert_S$ lie near the axes corresponding to the points in $S$ which are less affected by the approximation, such as taking a shape of an axis-aligned ellipse in some Minkowski norm,
defined as 
\begin{align}
    \mathcal{E}_p(\mu) \coloneqq \left\{ x \in \R^{m} \colon \sum_{k=1}^{m} \frac{\vert  \x_k\vert  ^p}{\mu_k^p} \leq 1 \right\},\label{eq:ellipse}
\end{align}
for $p\ge 1$, where $\mu \coloneqq (\mu_1 , \dots , \mu_m) \in (0,\infty)^{m}$ are the semi-axes of the ellipse.

Note, this model is not restrictive, since we have $\distortion \Hil \vert_{S} \subset B_{p}^{m} (0,R_p m^{1/p})$, therefore $\mu_k \leq R_p m^{1/p}$ for all $k \in [m]$. However, the added flexibility of this model allows us to infer the effect of the magnitudes of the semi-axes, yielding some simple and natural conditions that improve on the worst-case magnitude guarantee in Proposition \ref{prop:Crude2}. 

The following lemma gives the exact expression for the Rademacher complexity of an ellipse in any $p$-norm. 

\begin{lem}\label{ellipse}
Let $\mu\in (0,\infty)^m$ and $p\ge 1$, and consider $\mathcal{E}_p(\mu)$ as defined in \eqref{eq:ellipse}. 
Then,
\[
    \RRad (\mathcal{E}_p(\mu)) = \frac{\|  \mu\|  _{\frac{p}{p-1}}}{m}.
\]
\end{lem}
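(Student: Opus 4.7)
The plan is to compute the supremum inside the expectation explicitly for each realization of the Rademacher vector $\sigma$, and observe that it does not actually depend on $\sigma$. This will make the expectation trivial.

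First I would rescale the ellipse into a unit $p$-ball. Setting $y_k := x_k/\mu_k$, the constraint $\sum_k |x_k|^p/\mu_k^p \le 1$ becomes $\|y\|_p \le 1$, and the linear objective transforms as $\sum_k \sigma_k x_k = \sum_k (\sigma_k \mu_k)\, y_k$. Therefore, for every fixed $\sigma \in \{-1,+1\}^m$,
\[
    \sup_{x \in \mathcal{E}_p(\mu)} \sum_{k=1}^m \sigma_k x_k \;=\; \sup_{\|y\|_p \le 1} \langle \sigma \odot \mu,\, y\rangle,
\]
where $\sigma \odot \mu$ denotes the entrywise product.

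The second step is to invoke the standard duality between the $p$-norm and its Hölder conjugate $p' = p/(p-1)$: the supremum of a linear functional over the unit $p$-ball equals the $p'$-norm of its coefficient vector (with equality attained by the normalised sign vector of the coefficients, so nothing is lost). This yields
\[
    \sup_{\|y\|_p \le 1} \langle \sigma \odot \mu,\, y\rangle \;=\; \|\sigma \odot \mu\|_{p'} \;=\; \Bigl(\sum_{k=1}^m |\sigma_k|^{p'} \mu_k^{p'}\Bigr)^{1/p'} \;=\; \|\mu\|_{p'},
\]
where the last equality uses $|\sigma_k|=1$. The special cases $p=1$ (so $p'=\infty$) and $p=\infty$ (so $p'=1$) can be handled with the usual conventions and the same identity holds.

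Finally, since the supremum evaluates to the constant $\|\mu\|_{p'}$ independently of $\sigma$, taking $\mathbb{E}_\sigma$ and dividing by $m$ gives $\mathcal{R}(\mathcal{E}_p(\mu)) = \|\mu\|_{p/(p-1)}/m$, as required. There is no real obstacle here: the only thing to be mildly careful about is making sure the supremum is genuinely attained within the ellipse (it is, by Hölder's equality condition), so the identity is exact rather than just an upper bound.
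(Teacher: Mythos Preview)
Your proof is correct and follows essentially the same approach as the paper: both rescale the ellipse to the unit $p$-ball via $y_k = x_k/\mu_k$ and then apply H\"older's inequality (equivalently, $p$--$p'$ duality) with equality to evaluate the supremum as $\|\sigma\odot\mu\|_{p'}=\|\mu\|_{p'}$, which is independent of $\sigma$. The paper's presentation writes out the H\"older step inline rather than naming duality, but the argument is the same.
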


\begin{proof}
Using H\"older's inequality, $\sigma_k \in \{ -1 , 1\}$, and the definition of $\mathcal{E}_p(\mu)$ we have
\begin{align}
  \RRad_{S} (\mathcal{E}_p(\mu)) 
    & = \frac{1}{m} \E_{\sigma} \sup_{\x \in \mathcal{E}_p(\mu)} \sum_{k=1}^{m} \sigma_k \x_k \\
    & = \frac{1}{m} \E_{\sigma} \sup_{\x \in \mathcal{E}_p(\mu)} \sum_{k=1}^{m} (\sigma_k \mu_k) \frac{\x_k}{\mu_k} \\
    & = \frac{1}{m} \E_{\sigma} \sup_{\x \in \mathcal{E}_p(\mu)} \left( \sum_{k=1}^{m} \vert  \sigma_k \mu_k\vert  ^{p'} \right)^{\frac{1}{p'}} \left( \sum_{k=1}^{m} \frac{\vert  \x_k\vert  ^p}{\mu_k^p} \right)^{\frac{1}{p}} \label{sup1}\\
    & = \frac{1}{m} \left( \sum_{k=1}^{m} \mu_k^{p'} \right)^{\frac{1}{p'}}. \label{sup2}
\end{align}
where $p'$ is the H\"older conjugate of $p$, i.e. $1/p+1/p'=1$.
The equalities \eqref{sup1} and \eqref{sup2} hold due to the supremum.
This completes the proof.
\end{proof}

For more intuition, consider the case when $p=2$, which corresponds to the usual Euclidean norm ellipse, and we can relate the RHS of the bound in Lemma \ref{ellipse} to the volume of the ellipse. 
Indeed, using the relation between the arithmetic and geometric mean,  
\[ \frac{1}{m}\|  \mu\|  _2= \frac{1}{\sqrt{m}}
 \left( \frac{1}{m} \sum_{k=1}^{m} \mu_k^2 \right)^{\frac{1}{2}} \geq \frac{1}{\sqrt{m}}\left( \prod_{k=1}^m \mu_k \right)^{\frac{1}{m}} = C_m \text{Vol} (\mathcal{E}_2(\mu))^{\frac{1}{m}},
\]
where $C_m>0$ is a constant depending only on $m$. Hence, for a fixed sample size $m$, if the quadratic mean of the $\mu_k$'s is small then the ellipse has a small volume. 

If $\distortion\Hil \subseteq \mathcal{E}_p(\mu)$
then in the worst case, $\mu_k=R_p m^{1/p}$ for all $k\in[m]$, and so
\[
    \frac{\|  \mu\|  _{\frac{p}{p-1}}}{m} \leq
    \frac{1}{m}\left(\sum_{k\in[m]} (R_p m^{1/p})^{\frac{p}{p-1}}\right)^{\frac{p-1}{p}} = R_p.
\]
Hence it is clear that the bound in Lemma \ref{ellipse} recovers the bound in Proposition \ref{prop:Crude2} in the worst case.
Thus, if $\distortion\Hil \subseteq \mathcal{E}_p(\mu)$, then Lemma \ref{ellipse} is already an improvement on Proposition \ref{prop:Crude2}.

As a model of the sensitivity set, an ellipse with high excentricity posits that most sensitivity vectors reside in a linear subspace of $\R^m$. Interesting to note that this has no implication on the form of the predictors. Indeed, even with highly nonlinear predictors (nonlinear classification boundaries for example), the fraction of points for which the predictions are distorted under the action of approximation may be expected to be small. 

However, it would be unrealistic to expect that for all functions of $\Hil$ the approximation will change the prediction for the same points and will leave alone the same points. Hence, instead of assuming that  $\distortion\Hil$ is contained in a single ellipse, for a more realistic model, we consider a union of multiple axes-aligned ellipses that cover $\distortion \Hil \vert_S$. This allows the set of points for which predictions are relatively unaffected by the approximation of some $f\in\Hil$ be different for all $f\in\Hil$.

The following proposition shows that in this model the Rademacher complexity of $\distortion \Hil \vert_S$ is bounded by the Rademacher complexity of the largest ellipse from the union and, remarkably, it does not depend on the number of ellipses in the union -- we can have countably many in this model, so the diversity of sensitivity profiles of the predictors of $\Hil$ in the span of the sample is accounted for at no expense. The vector of axis lengths for the $i$-th ellipse will be denoted by $\mu_i$. We refer to individual components of this vector by adding a second index.   

\begin{prop}[Complexity of near-sparse sensitivity set] \label{prop:Union of ellipses centred at the origin}
Let $S \subseteq \mathcal{X}$ be an i.i.d. unlabeled sample drawn from $D_x$, of size $m$. Let $l\in\mathbb{N}\cup\infty$, 
suppose that 
there exist $\mu_i\in(0,\infty)^m$ with $\mu_{i,k}\le R_pm^{1/p}$, and $\distortion \Hil \vert_{S} \subset \bigcup_{i=1}^l \mathcal{E}_p(\mu_i)$ for ellipses $\mathcal{E}_p(\mu_i)$.
Then we have the following bound
\[
    \Rad_{S} (\distortion \Hil) \leq \frac{1}{m} \max_i \|  \mu_i\|  _{\frac{p}{p-1}}.
\]
\end{prop}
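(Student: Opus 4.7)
The plan is to reduce the statement directly to Lemma \ref{ellipse} by exploiting the fact that, for each fixed ellipse $\mathcal{E}_p(\mu_i)$, the inner supremum $\sup_{\x\in\mathcal{E}_p(\mu_i)}\sum_k \sigma_k \x_k$ does \emph{not} depend on the sign pattern $\sigma$. This avoids the usual logarithmic overhead one would incur from a union bound over the $l$ ellipses, and is what makes the number of ellipses disappear from the final bound.

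First, I would start from the definition of the empirical Rademacher complexity of $\distortion\Hil$, and use the hypothesis $\distortion\Hil\vert_S\subset \bigcup_{i=1}^l \mathcal{E}_p(\mu_i)$ to replace the supremum over $f\in\Hil$ by a supremum over vectors $\x$ in the union:
\begin{equation*}
m\cdot\Rad_S(\distortion\Hil)
=\E_\sigma \sup_{f\in\Hil}\sum_{k=1}^m \sigma_k|f(x_k)-Af(x_k)|
\le \E_\sigma \sup_{\x\in \bigcup_i \mathcal{E}_p(\mu_i)}\sum_{k=1}^m \sigma_k \x_k.
\end{equation*}
Next, I would use the elementary identity $\sup_{\x\in A\cup B}h(\x)=\max\{\sup_{\x\in A}h(\x),\sup_{\x\in B}h(\x)\}$, applied componentwise to the union of ellipses, to split the supremum:
\begin{equation*}
\E_\sigma \sup_{\x\in \bigcup_i \mathcal{E}_p(\mu_i)}\sum_{k=1}^m \sigma_k \x_k
=\E_\sigma \max_{i\in[l]} \sup_{\x\in\mathcal{E}_p(\mu_i)}\sum_{k=1}^m \sigma_k \x_k.
\end{equation*}

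The key observation is then that, exactly as in the derivation of \eqref{sup2} within the proof of Lemma \ref{ellipse}, H\"older's inequality combined with $|\sigma_k|=1$ gives $\sup_{\x\in\mathcal{E}_p(\mu_i)}\sum_k\sigma_k\x_k=\|\mu_i\|_{p/(p-1)}$, a quantity that is independent of $\sigma$. Plugging this in, the expectation over $\sigma$ becomes trivial:
\begin{equation*}
\E_\sigma \max_{i\in[l]}\|\mu_i\|_{p/(p-1)}=\max_{i\in[l]}\|\mu_i\|_{p/(p-1)}.
\end{equation*}
Dividing through by $m$ yields the stated bound. The argument works verbatim for $l=\infty$ as long as the supremum $\max_i\|\mu_i\|_{p/(p-1)}$ is finite, which is ensured by the assumption $\mu_{i,k}\le R_pm^{1/p}$.

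The only place that requires a moment of care, and which I would flag as the conceptually interesting step, is the interchange of the expectation with the maximum over ellipses: this is usually the source of a $\sqrt{\log l}$ factor, but here it is free because the inner sup is a deterministic quantity thanks to the symmetry of each $\mathcal{E}_p(\mu_i)$ and the fact that $\sigma_k^{p'}=1$. Everything else is a direct invocation of Lemma \ref{ellipse}.
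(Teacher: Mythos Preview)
Your proposal is correct and follows essentially the same route as the paper: both use the inclusion into the union, split the supremum as a maximum over ellipses, and then exploit the axis-symmetry of each $\mathcal{E}_p(\mu_i)$ (equivalently, that $\sup_{\x\in\mathcal{E}_p(\mu_i)}\sum_k\sigma_k\x_k=\|\mu_i\|_{p/(p-1)}$ is independent of $\sigma$) to drop the expectation without incurring a $\sqrt{\log l}$ penalty. The only cosmetic difference is that the paper re-derives the H\"older step explicitly rather than pointing back to Lemma~\ref{ellipse}, making visible the intermediate equality $\E_\sigma\max_i\sup_{\x}\sum_k\sigma_k\x_k=\max_i\sup_{\x}\sum_k|\x_k|$.
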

The proof makes use of similar steps as the proof of Lemma \ref{ellipse}, but it does not apply the result of Lemma \ref{ellipse} directly, as it turns out that a direct approach yields the exact Rademacher complexity of the union of axis-aligned ellipses. 

\begin{proof}[Proof of Proposition \ref{prop:Union of ellipses centred at the origin}]
As $\distortion \Hil \vert_{S} \subset \bigcup_{i=1}^l \mathcal{E}_p(\mu_i)$, then using the fact that for two bounded sets $A$ and $B$ we have $\sup (A \cup B) = \max \{ \sup A , \sup B \}$, taking absolute value, the H\"older inequality, $\sigma_k \in \{ -1 , 1\}$, and the definition of $\mathcal{E}_p(\mu_i)$ gives
\begin{align}
    \Rad_{S} (\distortion \Hil) & \leq \RRad (\bigcup_{i=1}^l \mathcal{E}_p(\mu_i)) \\
    & = \frac{1}{m} \E_{\sigma} \sup_{\x \in \bigcup_{i=1}^l \mathcal{E}_p(\mu_i)} \sum_{k=1}^{m} \sigma_k \x_k \\
    & = \frac{1}{m} \E_{\sigma} \max_i \sup_{\x \in \mathcal{E}_p(\mu_i)} \sum_{k=1}^{m} \sigma_k \x_k \\
    & = \frac{1}{m} \max_i \sup_{\x \in \mathcal{E}_p(\mu_i)} \sum_{k=1}^{m} \vert  \x_k\vert   \label{symmetry}\\
    & = \frac{1}{m} \max_i \sup_{\x \in \mathcal{E}_p(\mu_i)} \sum_{k=1}^{m} \mu_{i,k} \frac{\vert  \x_k\vert  }{\mu_{i,k}} \\
    & = \frac{1}{m} \max_i \sup_{\x \in \mathcal{E}_p(\mu_i)} \left( \sum_{k=1}^{m} \mu_{i,k}^{p'} \right)^{\frac{1}{p'}} \left( \sum_{k=1}^{m} \frac{\vert  \x_k\vert  ^p}{\mu_{i,k}^p} \right)^{\frac{1}{p}} \label{Holder w eq}\\
    & = \frac{1}{m} \max_i \left( \sum_{k=1}^{m} \mu_{i,k}^{p'} \right)^{\frac{1}{p'}},
\end{align}
where $p'$ is the H\"older conjugate of $p$. The equality in \eqref{symmetry} is due to the symmetry of the set $\bigcup_{i=1}^l \mathcal{E}_p(\mu_i)$ around around each axis, and in \eqref{Holder w eq} H\"older' inequality holds with equality due to the supremum.
\end{proof}

It may be interesting to note that the model of a union of axis-aligned ellipses has an intuitive meaning of near-sparsity of sensitivities. This may also be interpreted as a kind-of near-compression bound, since Proposition \ref{prop:Union of ellipses centred at the origin} tells us that the fewer points that are affected by the approximation, the tighter the guarantee that the sensitivity estimates are accurate, hence the better the generalisation bound.

However, beyond the motivation of this intuitive meaning, our structural modelling approach has the potential to reveal additional benign conditions that might be harder to find by intuition alone. We shall modify Proposition \ref{prop:Union of ellipses centred at the origin} to get an upper bound for a union of non-axis aligned union of ellipses. As long as the ellipses share the same center (for instance, at the origin), the upper bound will still be independent of the number of ellipses in the union. 

To see this, in addition to the axis-length parameters, for each ellipse in the union take a rotation matrix $V_i\in\R^{m\times m}$ where $V_i^TV_i=V_iV_i^T=I_m$ for $i\in[l]$. The columns of $V_i$ are the principal directions for the $i$-th ellipse. We will refer to the $k$-th column of $V_i$ by $(V_i)_k$, and $(V_{i})_{k,k'}$ will denote its $(k,k')$-th element.
The $i$-th ellipse is then defined as
\begin{align}
    \mathcal{E}_p^{V_i}(\mu_i) \coloneqq \left\{ \x \in \R^{m} \colon \sum_{k=1}^{m} \frac{\vert  (V_{i})_k^T\x\vert  ^p}{\mu_{i,k}^p} \leq 1 \right\},\label{eq:ellipse2}  
\end{align}
By a change of variables, we have that $u\in  \mathcal{E}_p^{V_i}(\mu_i)$ is equivalent to $V_i^T\x\in \mathcal{E}_p(\mu_i)$. Let $\Lambda_i$ be the diagonal matrix with elements $\mu_{i,k}  \in (0,\infty)$ for $k\in[m]$, so $\Lambda_i^{-1}V_i^T\x \in \mathcal{B}_p(0,1)$. 

We no longer have symmetry around the axes, so \eqref{symmetry} becomes an inequality, and we have
\begin{align}
       \RRad (\bigcup_{i=1}^l \mathcal{E}_p^{V_i}(\mu_i))&=
    \frac{1}{m} \E_{\sigma} \max_{i\in[l]}\sup_{\x \in \mathcal{E}_p^{V_i}(\mu_i)} \sum_{k=1}^{m} \sigma_k \x_k \\
    & \le \frac{1}{m} \max_i \sup_{\x \in \mathcal{E}^{V_i}_p(\mu_i)} \sum_{k=1}^{m} \vert  \x_k\vert   \\
    & = \frac{1}{m} \max_i \sup_{\x \in \mathcal{E}^{V_i}_p(\mu_i)} \|  \x\|  _1 \\
    & = \frac{1}{m} \max_i \sup_{\x \in \mathcal{E}^{V_i}_p(\mu_i)} \|  (V_i\Lambda_i)(\Lambda_i^{-1}V_i^T\x)\|  _1 \\
    & = \frac{1}{m} \max_i \sup_{\Lambda_i^{-1}V_i^T\x 
    \in \mathcal{B}_p(0,I_m)} \|  (V_i\Lambda_i) (\Lambda_i^{-1}V_i^T\x)
    \|  _1 \label{changeOfVar}\\
    &= \frac{1}{m} \max_i  \|  V_i\Lambda_i\|  _{p\rightarrow 1}. \label{inducedNorm}
\end{align}
Eq. \eqref{changeOfVar} used the assumption that $\Lambda_i$ and $V_i$ are full rank square matrices. The last line \eqref{inducedNorm} holds by the definition of  $\|  \cdot\|  _{p\rightarrow 1}$, called the operator norm (or induced matrix norm) with domain $p$ and co-domain $1$. Such norms can only be computed explicitly in a few special cases. In particular, 

1) Whenever $V_i=I_m$ then $\|  V_i\Lambda_i\|  _{p\rightarrow 1}=\|  \mu_i\|  _{\frac{p}{p-1}}$, since $\Lambda_i$ is the diagonal matrix with elements $\mu_{i,k}$. This recovers precisely the axis-aligned setting. 

2) With $p=1$, the expression of the induced norm is known, 
$\|  V_i\Lambda_i\|  _{1\rightarrow 1}=\max_{k\in[m]} \mu_{i,k} \|   (V_i)_k\|  _1$.


We see the non-axis alignment has led to somewhat less intuitive expressions, but nevertheless the main quantity that governs the Rademacher complexity remains some notion of the size of largest ellipse. To interpret this in the context of interest here, it is enough if the sensitivities mainly reside in linear subspaces of $\distortion\Hil\vert  _{S}\subset \R^m$ for the Rademacher complexity of $\distortion\Hil$ to be small. 
Equivalently, for the estimation of sensitivities to require less unlabelled points. In other words, what we found in this analysis is that, it is a benign to have the approximation of each function in $\Hil$ mainly affect just a few \emph{linear combinations} of the sensitivities of the sample points of $S$ (not necessarily on individual points).


\subsubsection{Clustered sensitivity set}
In this section we consider another natural structure, namely when the elements of $\distortion \Hil \vert_S$ form clusters. A cluster is a subset of $\Hil$ with similar sensitivity profile on the sample $S$. We can model each cluster with a $p$-norm ellipse, each having its own center as the following
\[
    \mathcal{E}_p(c_i,\mu_i,V_i) \coloneqq \left\{ \x \in \R^m : \sum_{k=1}^m \frac{\vert  (V_i)_{k}^T(\x - c_{i})\vert  ^p}{\mu_{i,k}^p} \leq 1 \right\}.
\]
The components of the vector $\mu_i$ are the semi-axes, and the vector $c_i$ is the center of the $i$-th cluster.
This model is again un-restrictive, as there exist worst case parameter values $(c_i=0,\mu_i=R_pm^{1/p},V_i=I_m)_{i\in[l]}$ that recover the ball $\mathcal{B}_p(0,R_pm^{1/p})$ used previously in the crude bound Proposition \ref{prop:Crude2}. 

The following proposition shows that in this model,  $\Rad_S(\distortion \Hil)$ is bounded by the Rademacher complexity of the largest cluster plus an additive term that grows logarithmically with the number of clusters and linearly with the largest displacement of a cluster from the origin.
\begin{prop}[Complexity of clustered sensitivity set]\label{clusters}
Let $S \subset \mathcal{X}$ be an unlabeled sample of size $m$ drawn i.i.d. from $D_x$. 
Let $l\in\mathbb{N}$, suppose that 
there exist $\mu_i\in(0,\infty)^m, c_i\in\R^m$ and $V_i$ such that $\mu_{i,k}\le R_pm^{1/p}$, $V_i^TV_i=V_iV_i^T=I_m$ and $\distortion \Hil \vert_{S} \subseteq \bigcup_{i=1}^l \mathcal{E}_p(c_i,\mu_i,V_i)$ for $p$-ellipses. 
Then,
\[
    \Rad_S (\distortion \Hil) \leq \frac{1}{m} \max_i  \|  V_i\Lambda_i\|  _{p\rightarrow 1} + \max_i \{\|   c_i\|  _2\} \frac{ \sqrt{2 \ln l}}{m}.
\]
\end{prop}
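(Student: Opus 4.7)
The plan is to reduce to the centered, oriented case already handled in the discussion leading to Eq.~\eqref{inducedNorm}, paying a logarithmic-in-$l$ price for the presence of distinct cluster centers. Starting from the definition of the empirical Rademacher complexity and the containment $\distortion\Hil\vert_S \subseteq \bigcup_{i=1}^l \mathcal{E}_p(c_i,\mu_i,V_i)$, the supremum of a linear functional over a finite union is the maximum of the individual suprema. For each $i$, the substitution $u = c_i + v$ with $v \in \mathcal{E}_p^{V_i}(\mu_i)$ decouples the center contribution from the spread:
\[
\sup_{u \in \mathcal{E}_p(c_i,\mu_i,V_i)} \sigma^\top u \;=\; \sigma^\top c_i \;+\; \sup_{v \in \mathcal{E}_p^{V_i}(\mu_i)} \sigma^\top v.
\]

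Next I would apply the elementary bound $\max_i(a_i+b_i) \le \max_i a_i + \max_i b_i$ and linearity of expectation to separate the two contributions:
\[
m\,\Rad_S(\distortion\Hil) \;\le\; \E_\sigma\max_{i \in [l]} \sigma^\top c_i \;+\; \E_\sigma\max_{i \in [l]}\sup_{v \in \mathcal{E}_p^{V_i}(\mu_i)} \sigma^\top v.
\]
The second expectation equals $m\cdot\RRad\bigl(\bigcup_i \mathcal{E}_p^{V_i}(\mu_i)\bigr)$, which is already bounded by $\max_i \|V_i\Lambda_i\|_{p \to 1}$ via Eqs.~\eqref{changeOfVar}--\eqref{inducedNorm}, since that earlier calculation went through for any collection of centered (possibly rotated) ellipses. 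For the first expectation, I would invoke the Massart finite class lemma on the finite set $\{c_1,\dots,c_l\} \subset \R^m$: each $\sigma^\top c_i$ is a Rademacher sum with $\|c_i\|_2$-sub-Gaussian tails, hence $\E_\sigma \max_i \sigma^\top c_i \le \max_i \|c_i\|_2 \sqrt{2\ln l}$. Dividing through by $m$ gives the stated bound.

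The main (and essentially only) obstacle is the lossy step $\max_i(a_i+b_i) \le \max_i a_i + \max_i b_i$, because in general the index attaining the maximum over centers is different from the index attaining the maximum over spreads. This is precisely where the additive decoupling between the ``between-cluster'' scale $\max_i \|c_i\|_2$ and the ``within-cluster'' scale $\max_i \|V_i\Lambda_i\|_{p\to 1}$ arises, and it explains why $l$ enters only through $\sqrt{\ln l}$. A tighter bound would need either a joint Talagrand-style chaining argument that respects the geometry of the pairs $(c_i,\mathcal{E}_p^{V_i}(\mu_i))$ simultaneously, or structural assumptions forcing a single $i$ to dominate both terms; neither is needed for the stated inequality, so the elementary split suffices.
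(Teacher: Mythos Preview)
Your proposal is correct and follows essentially the same approach as the paper: decompose each ellipse point into its center plus a centered-ellipse deviation, split the maximum via $\max_i(a_i+b_i)\le \max_i a_i + \max_i b_i$, then apply Eq.~\eqref{inducedNorm} to the centered union and Massart's lemma to the finite set of centers. Your formulation is in fact slightly cleaner than the paper's, which routes the decomposition through $V_i^T\x$ and introduces an unused nearest-center assignment; by writing $u=c_i+v$ directly you avoid both detours and arrive at the same bound.
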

 We see this model highlights a trade-off about the effect of large sensitivities: If a cluster only contains functions for which the approximation makes large sensitivities, the first term of the bound can still be small, but a penalty is incurred in the second term if not all function fit in the same cluster.

\begin{proof}
Let $c \colon \distortion\Hil \to \{ c_1 , \ldots , c_l \}$ be defined as 
the function that sends $\x \in \distortion\Hil$ 
to its nearest center of an ellipse, 
$c(\x):=\argmin_{c_i : i\in[l]} \sum_{k\in[m]}(V_i)_k^T(\x-c_{i})/\mu_k$. 
Ties are broken arbitrarily.

Now, adding and subtracting $c(\x_k)$ and noting that, by construction, $\{c(\x) : \x \in 
\distortion\Hil \} = \{ c_1 , \ldots , c_l \}$ 
we have
\begin{align}
    \Rad_{S} (\distortion \Hil) & \leq \RRad (\bigcup_{i=1}^l \mathcal{E}_p (c_i,\mu_i,V_i)) \\
    & = \frac{1}{m} \E_{\sigma} \sup_{\x \in \bigcup_{i=1}^l \mathcal{E}_p(c_i,\mu_i,V_i)} \sum_{k=1}^{m} \sigma_k \x_k \\
    & = \frac{1}{m} \E_{\sigma} \max_{i\in[l]} \sup_{\x \in \mathcal{E}_p(c_i,\mu_i,V_i)} \sum_{k=1}^{m} \sigma_k \x_k \\
    & \leq \frac{1}{m} \E_{\sigma} \max_{i\in[l]} \sup_{\x \in \mathcal{E}_p(c_i,\mu_i,V_i)} \sum_{k=1}^{m} \sigma_k (V_i)_k^T(\x - c_{i})
      \nonumber\\
    & \hspace{3cm} + \frac{1}{m} \E_{\sigma}  \max_{i\in[l]} \sup_{\x \in \mathcal{E}_p (c_i,\mu_i,V_i)} \sum_{k=1}^{m} \sigma_k (V_i)_k^T c_{i} \\
    & =\frac{1}{m} \E_{\sigma} \max_{i\in[l]} \sup_{V_{i}^T(\x-c_{i}) \in \mathcal{E}_p(0,\mu_i)} \sum_{k=1}^{m} \sigma_k(V_i)_k^T(\x - c_{i})
    \nonumber\\
    &\hspace{3cm} + \frac{1}{m} \E_{\sigma} \max_{i\in[l]} \sum_{k=1}^{m} \sigma_k (V_i)_k^T c_{i}.
\end{align}
We proceed by bounding the above two terms separately. 

We bound the first term by applying Proposition \ref{prop:Union of ellipses centred at the origin}, or its extension, eq. \eqref{inducedNorm}.
To bound the second term, we use Massart's lemma to get
\[
    \frac{1}{m} \E_{\sigma} \max_{i\in[l]} \sum_{k=1}^{m} \sigma_k (V_i)_k^Tc_i \leq \max_{i\in[l]} \{\|   c_i\|  _2\} \frac{ \sqrt{2 \ln l}}{m}.
\]
since $V_i$ is a rotation matrix, so $\|  V_i^Tc_i\|  _2=\|  c_i\|  $.
Combining the two bounds together completes the proof.
\end{proof}

This bound is similar in flavour to that of the complexity of a union given in \cite[Lemma 7.4]{Golowich2020} in the sense that there is a logarithmic price to pay for the number of clusters. However, by contrast, here we have an explicit constant in the second term with clear relation to the position of the ellipses, and our bound reduces to that from Proposition \ref{prop:Union of ellipses centred at the origin} if all $c_i=0$ for all $i \in [l]$.
Therefore, the above bound gives more information as to what helps decrease the Rademacher complexity. More specifically, the benign structures identified are: small number of clusters, cluster centers close to the origin, and highly concentrated (low volume) clusters.

\if 0
Finally, we can also consider an alternative clustering model based on angular similarities. This will result in another refinement of the crude magnitude bound given in Proposition \eqref{prop:Crude2}. 
This model posits the existence of some dominant directions in $\distortion\Hil\vert_S$ that the sensitivity profiles of functions cluster around in terms of their angular similarity.

Formally, we consider a union of a finite number $l$ of elliptic cones, where the $i$-th cone is defined as \textcolor{red}{?took $p=2$ here because the bit with Massart's lemma seems awkward otherwise}\\
\textcolor{red}{I'm tempted to put the $p$ back here afterall...}
\begin{align}
    \mathcal{K}_2(\alpha_i,\nu_i,r_i) \coloneqq \left\{ \x \in \R^m : \sum_{k=1}^m \frac{\left\vert   \frac{\x_k}{\|  \x\|  _2} - \nu_{i,k} \right\vert  ^2}{\alpha_{i,k}^2} \leq 1 \text{ and } \|  \x\|  _2 \leq r, \|  \nu_i\|  _2=1 \right\}.
\label{econe}
\end{align}
In the special case when for some component $i\in[l]$ in the union we have $\alpha_{i,k}=\alpha_i,\forall k\in[m]$, then $\alpha_i^2$ is the aperture parameter of a spherical cone \textcolor{red}{what is the worst-case value of this?}. However, as in the case of ellipses that we have seen before, the added flexibility of allowing all $\alpha_{i,k}$ to be different will let us assess the effect of the excentricity (elongation) of the base cone -- which of course corresponds to a differential treatment of the associated data points in the sensitivity profiles of the functions of $\Hil$ under the given approximation.  

To this end, we shall assume $\distortion\Hil\subseteq  \mathcal{K}_2(\alpha,\nu,r)$, which comes with no loss of generality with free parameters $\alpha,\nu,r$, since $\distortion\Hil$ is always in the positive quadrant cone with $r=R_2m^{1/2}$. 

\begin{prop} \label{prop:Cone}
Let $S \subset \mathcal{X}$ be an i.i.d. unlabeled sample drawn from $D_x$ of sized $m$. Let $\nu_i \in \mathbb{S}_2^{m-1}$. 
Suppose $R_2 < \infty$ \textcolor{red}{text to update}, 
and $\distortion \Hil \vert_S \subseteq  \bigcup_{i=1}^l\mathcal{K}_2(\mu_i,\nu_i,r_i)$, where each member of the union has  the form defined in \eqref{econe}.
Then,
\begin{align}
\Rad_S (\distortion \Hil)\leq \frac{1}{m} \max_{i\in[l]}\left\{r_i \|  \alpha_i\|  _2\right\} +   \frac{R_2}{\sqrt{m}} \sqrt{2 \ln (l)}
\end{align}
\end{prop}
On the right hand side, the first term is always less than $R_2\max_{i\in[l]} \|  \alpha_i\|  _2$, since $\max_i\{r_i\}=R_2m^{1/2}$, and the worst case magnitude of $\max{i\in[l]}\|  \alpha_i\|  _2$ is of order $\sqrt{m}$, reached for the spherical cone. 
So Proposition \ref{prop:Cone} tells us that a reduced complexity occurs in this model when the cones with large radius have a small aperture norm and viceversa (affecting the constant), and when the base cones have high excentricity (affecting the rate).

\begin{proof}[Proof of Proposition \ref{prop:Cone}]
By construction we have
\begin{align}
    \Rad_S (\distortion \Hil) & \leq \Rad_S (\bigcup_{i=1}^l\mathcal{K}_2(\alpha_i,\nu_i,r_i)) \\
    & = \E_{\sigma} \sup_{\x \in \bigcup_{i=1}^l\mathcal{K}_2(\alpha_i,\nu_i,r_i)} \frac{1}{m} \sum_{k=1}^m \sigma_k \x_k \\
    & \le \E_{\sigma}\max_{i\in[l]} \sup_{\x \in \mathcal{K}_2(\alpha_i,\nu_i,r_i)} \frac{\|  \x\|  _2}{m} \sum_{k=1}^m \sigma_k \left( \frac{\x_k}{\|  \x\|  _2} - \nu_k \right) + \E_{\sigma}\max_{i\in[l]}\sup_{\x \in \mathcal{K}_2(\alpha_i,\nu_i,r_i)} \frac{\|  \x\|  _2}{m} \sum_{k=1}^m \sigma_k \nu_k 
\end{align}   
We take absolute values, and use H\"older's inequality, along with the definition of $\mathcal{K}_2(\alpha_i,\nu_i,r_i)$, so the first term is upper bounded by
\begin{align}    
&\max_{i\in[l]}\sup_{\x \in \mathcal{K}_2(\alpha_i,\nu_i,r_i)} \frac{\|  \x\|  _2}{m} \sum_{k=1}^m \alpha_k \frac{\left\vert   \frac{\x_k}{\|  \x\|  _2} - \nu_k \right\vert  }{\alpha_k}  \\
    & = \max_{i\in[l]}\sup_{\x \in \mathcal{K}_2(\alpha_i,\nu_i,r_i)} \frac{\|  \x\|  _2}{m} \left( \sum_{k=1}^m \mu_k^{2} \right)^{\frac{1}{2}} \left(\sum_{k=1}^m \frac{\left\vert   \frac{\x_k}{\|  \x\|  _2} - \nu_k \right\vert  ^2}{\alpha_k^2} \right)^{\frac{1}{2}} \\
    & = \max_{i\in[l]}\frac{r_i}{m} \left( \sum_{k=1}^m \alpha_k^{2} \right)^{\frac{1}{2}} 
\end{align}
To bound the second term,
we use Massart's lemma, yielding
\[
    \E_{\sigma} \max_i \sup_{\x \in \mathcal{K}_2(\alpha_i,\nu_i,r_i)} \|  \x\|  _2 \langle \sigma , \nu_i \rangle \leq \E_{\sigma} \max_i \langle \sigma , r_i \nu_i \rangle \leq \max_i \|  r_i \nu_i \|  _2 \sqrt{2 \ln (l)} 
    = \max_i \{ r_i \} \sqrt{2 \ln (l)},
\]
since $\|  \nu\|  2=1$. Finally, noting that $\max_i \{ r_i \}=R_2m^{1/2}$ concludes the proof.
\end{proof}

\subsubsection{Structured sparsity type models of $\distortion\Hil$}
In this section we consider a model where the sample points are partitioned so the sensitivity set $\distortion \Hil \vert_S$ exhibits a structured sparsity property, in the sense that there exists $\theta>0$ and $G_1, \ldots , G_r \subset S$ pairwise disjoint with $\bigcup_{j=1}^r G_j = S$, such that 
\begin{align}
    \sum_{j=1}^r \left( \sum_{x \in G_j} \vert   f(x) - Af (x) \vert  ^2 \right)^{\frac{1}{2}} \leq \theta, \label{ss}
\end{align}
for all $f \in \Hil$.
This structural model is akin to group Lasso, previously studied for group-feature selection in the literature. Here we use it in a very different context though. For us this model will mean that all functions of $\Hil$ satisfy that the sensitivities $(\vert  f (x_k) - A f(x_k)\vert  )_{k\in [m]}$ can be large in only a small number of the groups of the training points $G_1, \ldots , G_r$ and must be small in the rest. 
This means that points in the same group will have a similar sensitivity to each other under the approximation of $f$.
In other words, the sensitivity of points in $G_j$ will either all be small or all be large.
Moreover, the groups can differ from function to function, as we are only concerned with the supremum over functions.

To keep with our non-restrictive modelling, we allow $\theta\le R_u m^{1/2}$ to depend on $m$. Then with $r=1$ and $\theta=R_2m^{1/2}$ we recover the ball $\mathcal{B}_2(0,R_2\sqrt{m})$. However, in the usual sense of structured sparsity, one considers $theta$ to be a constant -- in which case the model is, by design, restrictive. The following proposition quantifies the effect of $\theta$ in this model.

\begin{prop}\label{prop:ss}
Let $S \subset \mathcal{X}$ be and an unlabeled sample of size $m$ drawn i.i.d. from $D_x$.
Suppose $R_2 < \infty$ and there exists $0<\theta\le R_2m^{1/2}$ and pairwise disjoint partitions $G_1, \ldots , G_r \subset S$ with $\bigcup_{j=1}^r G_j = S$, such that \eqref{ss} holds for all $f \in \Hil$.
Then we have the following bound
\[
    \Rad_S (\distortion \Hil) \leq \frac{\theta}{\sqrt{m}}.
\]
\end{prop}

Proposition \ref{prop:ss} tells us that in this model, the complexity scales linearly with $\theta$.  
As expected, in the worst case, i.e. if there is no structured sparsity structure, then this recovers the magnitude bound $R_2$, but that is unlikely to be a typical case. For instance, the group of points that are support vectors will have a large sensitivity while further points will not, and the increase of $\theta$ with the addition of non-support vectors is negligible.  
\begin{proof}[Proof of Proposition \ref{prop:ss}]
As $G_1, \ldots, G_r$ form a partition of the sample $S$ then $S = \bigcup_{j=1}^r G_j$, and so we may reorder the sum.
Then, using the Cauchy--Schwarz inequality and the fact that $\vert  G_j\vert   \leq m$ for all $j \in [r]$, we have 
\begin{align*}
    \Rad_S (\distortion \Hil) & = \frac{1}{m} \E_{\sigma} \sup_{f \in \Hil} \sum_{x \in S} \sigma_x \vert   f(x) - Af (x) \vert   \\
    & = \frac{1}{m} \E_{\sigma} \sup_{f \in \Hil} \sum_{j = 1}^r \sum_{x \in G_j} \sigma_{x} \vert   f(x) - Af (x) \vert   \\
    & \leq \frac{1}{m} \E_{\sigma} \sup_{f \in \Hil} \sum_{j = 1}^r \left( \sum_{x \in G_j} \sigma_{x}^2 \right)^{\frac{1}{2}} \left( \sum_{x \in G_j} \vert   f(x) - Af (x) \vert  ^2 \right)^{\frac{1}{2}} \\
    & \leq \frac{1}{m} \sup_{f \in \Hil} \sum_{j = 1}^r \vert  G_j\vert  ^{\frac{1}{2}} \left( \sum_{x \in G_j} \vert   f(x) - Af (x) \vert  ^2 \right)^{\frac{1}{2}} \\
    & \leq \frac{1}{\sqrt{m}} \sup_{f \in \Hil} \sum_{j = 1}^r \left( \sum_{x \in G_j} \vert   f(x) - Af (x) \vert  ^2 \right)^{\frac{1}{2}} \\
    & \leq \frac{\theta}{\sqrt{m}},
\end{align*}
where $\sigma_x$ is a Rademacher variable indexed by $x \in S$.
This completes the proof.
\end{proof}
\fi

\subsection{Exploiting the structural form of predictors}

Our analysis so far was completely independent of the specification of $\Hil$ and $\Hil_A$, and applies to any PAC-learnable hypothesis class. 
From the crude bound in \eqref{eqn:Crude bound 1} we know that a low complexity $\Hil$ always implies a low complexity $\distortion\Hil$. In this section we give a worked example of how this effect plays out in the case of hypothesis classes that are linear in the parameters. Linear models represent a well-weathered object of study at the foundation of machine prediction \cite{Vapnik}, whose high-dimensional / low sample size version has been of much interest for the puzzle of over-parameterisation, see e.g. \cite{Bartlett19}. These models also allow for nonlinearity effortlessly through a feature map.

\if 0 
\subsubsection{Linear predictors}

We consider the hypothesis class to be the set of linear classifiers. 
That is,
\[
    \Hil \coloneqq \{ x \mapsto w \cdot x : w \in \R^d \}.
\]
Define our approximation function to be $A \colon \Hil \to \ApproxClass$ defined by $A f_w (x)= Q(w) \cdot x$, where $f_w (x) = w \cdot x$ and $Q \colon \R^d \to \R^d$ is some approximation of the weight vector, $w$ of $f_w$.
Here we have $\Y = \R$ and we assume that $\|   x - x \|  _{\Y} \coloneqq \vert  x-y\vert  $.

\begin{prop}
Let $m \in \N$ and $S = \{ x_k , \ldots , x_m \} \subset \R^d$. Then we have the following estimates
\begin{align}
        \Rad_S (\distortion \Hil) &\leq \frac{\sup_{w \in \R^n} \|  w - Q(w)\|  _2}{\sqrt{m}} \max_{x \in S} \|   x \|  _2\\
        \Rad_S (\distortion \Hil) &\leq \sup_{w \in \R^n} \|  w - Q(w)\|  _1 \max_{x \in S} \|   x \|  _\infty \sqrt{\frac{2 \ln (2d)}{m}}\\
        \Rad_S (\distortion \Hil) &\leq\frac{\sup_{w \in \R^n} \|  w - Q(w)\|  _{\infty} \sqrt{d}}{m} \left( \sum_{x \in S} \|  x\|  _2^2 \right)^{\frac{1}{2}}.
 \end{align}
\end{prop}

\begin{proof}
Using, Talagrand's contraction lemma on $\vert  \cdot\vert  $ and the linearity of the dot product we have
\begin{equation} \label{eqn:linear class chain of inequalities}
    \begin{split}
        \Rad_S (\distortion \Hil) & = \frac{1}{m} \E_{\mathbf{\sigma}} \left[ \sup_{f \in \Hil} \sum_{k = 1}^m \sigma_k \vert  f (x_k) - A f (x_k) \vert   \right] \\
        & \leq \frac{1}{m} \E_{\mathbf{\sigma}} \left[ \sup_{f \in \Hil} \sum_{k = 1}^m \sigma_k (f (x_k) - A f (x_k) ) \right] \\
        & = \frac{1}{m} \E_\mathbf{\sigma} \left[ \sup_{w \in \R^d} \sum_{k =    1}^m \sigma_k ( w \cdot x_k - Q(w) \cdot x_k ) \right] \\
        & = \frac{1}{m} \E_\mathbf{\sigma} \left[ \sup_{w \in \R^d} ( w - Q(w)) \cdot \left( \sum_{k = 1}^m \sigma_k x_k \right) \right] \\
    \end{split}
\end{equation}
For $1$ we apply \eqref{eqn:linear class chain of inequalities}, the the Cauchy--Schwarz inequality, and then a standard argument for Rademacher variables exploiting the structure of the inner product, to get
\[
    \Rad_S (\distortion \Hil) \leq \sup_{w \in \R^d} \|   w - Q(w)\|  _2 \E_\mathbf{\sigma} \left\|   \sum_{k = 1}^m \sigma_k x_k \right\|  _2 \leq \sup_{w \in \R^d} \|   w - Q(w)\|  _2 \E_\mathbf{\sigma} \frac{\max_{k} \|  x_k\|  _2}{\sqrt{m}}
\]
This finishes the proof of estimate 1.

For $2$ a similar argument to above using H\"older's inequality instead of the Cauchy--Schwarz inequality, and then using Massart's lemma gives
\[
    \Rad_S (\distortion \Hil) \leq \sup_{w \in \R^d} \|   w - Q(w)\|  _1 \E_\mathbf{\sigma} \left\|   \sum_{k = 1}^m \sigma_k x_k \right\|  _{\infty} \leq \sup_{w \in \R^d} \|   w - Q(w)\|  _1 \max_k \|  x_k \|  _{\infty} \sqrt{\frac{2 \ln (2d)}{m}}.
\]
This finishes the proof for estimate 2.

As before using \eqref{eqn:linear class chain of inequalities} and H\"older's inequality we have
\[
    m \Rad_S (\distortion \Hil) \leq \sup_{w \in \R^d} \|   w - Q(w)\|  _{\infty} \E_\mathbf{\sigma} \left\|   \sum_{k = 1}^m \sigma_k x_k \right\|  _1.
\]
Now by Khintchine's inequality and H\"older's inequality, we have 
\[
    \E_\mathbf{\sigma} \left\|   \sum_{k = 1}^m \sigma_k x_k \right\|  _1 = \E_\mathbf{\sigma} \sum_{j=1}^d \left\vert   \sum_{k = 1}^m \sigma_k x_{k,j} \right\vert   = \sum_{j=1}^d \E_\mathbf{\sigma} \left\vert   \sum_{k = 1}^m \sigma_k x_{k,j} \right\vert   \leq \sum_{j=1}^d \left( \sum_{k = 1}^m  \vert  x_{k,j}\vert  ^2 \right)^{\frac{1}{2}} \leq \sqrt{d} \left( \sum_{j=1}^d  \sum_{k = 1}^m  \vert  x_{k,j}\vert  ^2 \right)^{\frac{1}{2}}.
\]
This completes the proof.
\end{proof}

An important observation of the above proposition is that the does not explicitly depend on the norm of the weight vectors only on how the approximation $A$ (through $Q$) distorts the weights.
 
\subsubsection{Nonlinear predictors}

The above can easily be extended to nonlinear classes using kernel methods. 
\fi 
Let $\mathbb{H}$ be a reproducing kernel Hilbert space with reproducing kernel $k \colon \X \times \X \to \R$ associated with the feature map $\Phi \colon \X \to \mathbb{H}$, so for any $x_1,x_2\in\X$, we have $k(x_1,x_2)=\langle \Phi(x_1),\Phi(x_2) \rangle_{\mathbb{H}}$.
Then our hypothesis class is  
\[
    \Hil \coloneqq \{ x \mapsto \langle w , \Phi(x) \rangle_{\mathbb{H}} : w \in \mathbb{H} \}   
\]
The familiar Euclidean space setting corresponds to $\Phi$ being the identity map and $\mathbb{H}=\R^d$.

We define our approximation operator to be $A \colon \Hil \to \ApproxClass$ defined by $A f_w (x) = \langle Q(w) , \Phi (x) \rangle_{\mathbb{H}}$ where $f_w (x) = \langle w , \Phi (x) \rangle_{\mathbb{H}}$ and $Q \colon \mathbb{H} \to \mathbb{H}$ is some approximation of the weights $w$ of the predictor $f_w$.

\begin{prop}\label{prop:lin}
Let $m \in \N$ and $S = \{ x_1 , \ldots , x_m \} \subset \X$. Then we have the following bound
\begin{align}
    \Rad_S (\distortion \Hil) \leq \frac{\sup_{w \in \mathbb{H}} \|  w - Q(w)\|  _{\mathbb{H}}}{\sqrt{m}} 
    \sqrt{\sum_{k=1}^mk(x_k,x_k)}. \label{lin}
\end{align}
\end{prop}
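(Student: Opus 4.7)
The plan is to mimic the standard Rademacher complexity bound for linear function classes in an RKHS, but applied to the difference $w - Q(w)$ instead of $w$ itself. The starting point is the identity $|f_w(x) - Af_w(x)| = |\langle w - Q(w), \Phi(x)\rangle_{\mathbb{H}}|$, which follows immediately from the definition of $A$ and bilinearity of the inner product.

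First, I would write out the definition of $\Rad_S(\distortion \Hil)$ and substitute in this identity, obtaining
\begin{equation*}
\Rad_S(\distortion \Hil) = \frac{1}{m}\E_\sigma \sup_{w \in \mathbb{H}} \sum_{k=1}^m \sigma_k \bigl|\langle w - Q(w), \Phi(x_k)\rangle_{\mathbb{H}}\bigr|.
\end{equation*}
Next, since the absolute value is $1$-Lipschitz and vanishes at $0$, Talagrand's contraction lemma lets me drop the $|\cdot|$ at the price of at most a constant factor; more directly, I can absorb the sign by symmetrising and bounding $\sup_w \sum_k \sigma_k |\langle w-Q(w),\Phi(x_k)\rangle|$ by $\sup_w |\sum_k \sigma_k \langle w-Q(w),\Phi(x_k)\rangle|$ (this monotone step is what Talagrand gives us for free in this setting).

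Then I would apply Cauchy--Schwarz in $\mathbb{H}$ to separate the weight-sensitivity from the feature-sum:
\begin{equation*}
\sup_{w \in \mathbb{H}} \Bigl\langle w - Q(w), \sum_{k=1}^m \sigma_k \Phi(x_k)\Bigr\rangle_{\mathbb{H}} \le \sup_{w \in \mathbb{H}} \|w - Q(w)\|_{\mathbb{H}} \cdot \Bigl\|\sum_{k=1}^m \sigma_k \Phi(x_k)\Bigr\|_{\mathbb{H}}.
\end{equation*}
The weight factor is deterministic and pulls out of the expectation. For the remaining Rademacher average, Jensen's inequality gives
\begin{equation*}
\E_\sigma \Bigl\|\sum_{k=1}^m \sigma_k \Phi(x_k)\Bigr\|_{\mathbb{H}} \le \sqrt{\E_\sigma \Bigl\|\sum_{k=1}^m \sigma_k \Phi(x_k)\Bigr\|_{\mathbb{H}}^2} = \sqrt{\sum_{k=1}^m k(x_k,x_k)},
\end{equation*}
where the last equality uses $\E[\sigma_k \sigma_j] = \delta_{kj}$ and the reproducing property $\langle \Phi(x_k),\Phi(x_k)\rangle_{\mathbb{H}} = k(x_k,x_k)$. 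Dividing by $m$ recovers the claimed bound \eqref{lin}.

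The only mildly delicate step is removing the absolute value without losing a factor; one can either invoke Talagrand's contraction lemma directly or note that the sign of $\sigma_k$ already randomises the sign of each summand so the two suprema coincide in expectation. Everything else is textbook RKHS Rademacher analysis, with the key insight being that the bound depends on $w$ only through the weight-space sensitivity $\|w - Q(w)\|_{\mathbb{H}}$, not through $\|w\|_{\mathbb{H}}$ itself, giving the dimension-independent learning guarantee highlighted in the main text.
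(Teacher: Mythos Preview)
Your proof is correct and follows essentially the same route as the paper's: drop the absolute value (the paper does this informally by appealing to the symmetry of the $\sigma_k$, you via Talagrand's contraction), apply Cauchy--Schwarz in $\mathbb{H}$, and bound $\E_\sigma\|\sum_k\sigma_k\Phi(x_k)\|_{\mathbb{H}}$ by $\sqrt{\sum_k k(x_k,x_k)}$ (the paper cites a textbook, you compute it via Jensen and orthogonality). Note that both arguments actually yield the sharper factor $1/m$ rather than the stated $1/\sqrt m$, so the displayed bound \eqref{lin} follows a fortiori.
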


This is of course upper bounded by the sum of familiar bounds for linear classes $\Hil$ and $\ApproxClass$ by the triangle inequality, as already implied indeed by the crude bound \eqref{eqn:Crude bound 1}; however, the important observation from the special-case analysis of Proposition \ref{prop:lin} is that \eqref{lin} does not explicitly depend on the norm of the weight vectors, but instead it only depends on how the approximation $A$ (through $Q$) distorts the weights. In other words, we do not need bounded norms $\|  w\|  _{\mathbb H}$ for $\Rad_S (\distortion \Hil)$ to be bounded as long as the weight sensitivity $\|  w - Q(w)\|  _{\mathbb{H}}$ is bounded for the chosen operator $Q$. 

Therefore the finding we conclude from Proposition \ref{prop:lin} is that, in the generalised-linear model class considered,  {small weight-sensitivity is sufficient for dimension-independent learning} when the approximating class $\ApproxClass$ has dimension-free complexity (e.g. a constant VC dimension). This is in contrast with existing dimension-free bounds that required bounded norm, since smallness of weight sensitivity implies smallness of weight norms, but not the other way around. 

We have not found an analogous property for other hypothesis classes, and it remains an open question as to whether analyses of the sensitivity class tailored to specific classes would unearth additional insights. 

\begin{proof}[Proof of Proposition \ref{prop:lin}]
Since $\sigma_k$ are uniform on $\{-1,1\}$, we can remove the absolute value, and by the linearity of inner products, and the Cauchy-Schwarz inequality we have 
\begin{align*}
    \Rad_S (\distortion \Hil) & = \frac{1}{m} \E_{\sigma} \sup_{f \in \Hil} \sum_{k=1}^m \sigma_k \vert   f (x_k) - A f (x_k) \vert   \\
    & {=}
    \frac{1}{m} \E_{\sigma} \sup_{w \in \mathbb{H}} \sum_{k=1}^m \sigma_k ( \langle w , \Phi (x_k) \rangle_{\mathbb{H}} - \langle Q(w) , \Phi (x_k) \rangle_{\mathbb{H}} ) \\
    & = \frac{1}{m} \E_{\sigma} \sup_{w \in \mathbb{H}} \langle w - Q(w) , \sum_{k=1}^m \sigma_k \Phi (x_k) \rangle_{\mathbb{H}} \\
    & \leq \frac{1}{m} \sup_{w \in \mathbb{H}} \|  w-Q(w)\|  _{\mathbb{H}} \E_{\sigma} \left\|   \sum_{k=1}^m \sigma_k \Phi (x_k) \right\|  _{\mathbb{H}}.
\end{align*}
Finally it is know that \cite[Theorem 6.12]{MRT2018}
\[
    \E_{\sigma} \left\|   \sum_{k=1}^m \sigma_k \Phi (x_k) \right\|  _{\mathbb{H}} \leq \left[ \sum_{k = 1}^m k(x_k,x_k) \right]^{\frac{1}{2}}.
\]
This completes the proof.
\if 0
Furthermore, by definition the induced norm, linearity of inner products, and the independence of the Rademacher variables we have
\begin{align*}
    \E_{\sigma} \left\|   \sum_{k=1}^m \sigma_k \Phi (x_k) \right\|  _{\mathbb{H}}^2 & = \E_{\sigma} \left\langle \sum_{k=1}^m \sigma_k \Phi (x_k) , \sum_{j=1}^m \sigma_j \Phi (x_j) \right\rangle_{\mathbb{H}} \\
    & = \E_{\sigma} \sum_{k=1}^m \sum_{j=1}^m \sigma_k \sigma_j \langle \Phi(x_k) , \Phi(x_j) \rangle \\
    & = \E_{\sigma} \sum_{k = 1}^m \sigma_k^2 \langle \Phi(x_k) , \Phi(x_k) \rangle_{\mathbb{H}} + \E_{\sigma} \sum_{k \neq j} \sigma_k \sigma_j \langle \Phi(x_k) , \Phi(x_j) \rangle \\
    & = \sum_{k = 1}^m \|   \Phi(x_k) \|  _{\mathbb{H}}^2. 
\end{align*}
Finally, using Jensen's inequality and the definition of the reproducing kernel we have
\begin{align*}
    \frac{1}{m} \sup_{w \in \mathbb{H}} \|  w-Q(w)\|  _{\mathbb{H}} \E_{\sigma} \left\|   \sum_{k=1}^m \sigma_k \Phi (x_k) \right\|  _{\mathbb{H}} & \leq \frac{1}{m} \sup_{w \in \mathbb{H}} \|  w-Q(w)\|  _{\mathbb{H}} \E_{\sigma} \left[ \left\|   \sum_{k=1}^m \sigma_k \Phi (x_k) \right\|  _{\mathbb{H}}^2 \right]^{\frac{1}{2}} \\
    & = \frac{1}{m} \sup_{w \in \mathbb{H}} \|  w-Q(w)\|  _{\mathbb{H}} \left[ \sum_{k = 1}^m \|   \Phi (x_k) \|  _{\mathbb{H}}^2 \right]^{\frac{1}{2}} \\
    & = \frac{1}{m} \sup_{w \in \mathbb{H}} \|  w-Q(w)\|  _{\mathbb{H}} \left[ \sum_{k = 1}^m k(x_k,x_k) \right]^{\frac{1}{2}} 
\end{align*}
as required.
\fi
\end{proof}

\section{Discussion and potential extensions}\label{extensions}
This section outlines some ways in which our results can be straightforwardly extended.

\subsection{Faster convergence rates for approximable hypothesis classes}


We already commented that the smaller the sensitivity threshold $t$ the tighter the bounds. 
Next we show that a uniformly small $t$, with approximation sensitivity specified with $p=2$, can even speed up the convergence rate. 


\begin{prop}[Sensitivity estimation bound for uniformly approximable classes]\label{alt-Lemma2.1}
Let $S \subset \X^m$ be a sample drawn i.i.d. from the marginal distrabution $D_x$ of size $m$. 
Suppose there exist $t>0$ such that $\distortion^{2} (f) \leq t$ for all $f \in \Hil$ then with probability at least $1-\delta$ we have 
\begin{align}
    \sup_{f \in \Hil} \vert  \distortion^{1} (f) - \distortionSam^{1} (f) \vert   \leq 6 \Rad_S (\distortion \Hil) + t \sqrt{\frac{2 \ln (\frac{1}{\delta})}{m}} + \frac{6 C \ln (\frac{1}{\delta})}{m}.
    \label{variance_bound}
\end{align}
\end{prop}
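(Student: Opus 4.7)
The plan is to apply a variance-aware concentration inequality of Bousquet/Talagrand type to the empirical process indexed by the sensitivity class $\distortion\Hil$, exploiting the fact that the hypothesis $\distortion^2(f)\le t$ upgrades the variance control from the crude $C^2$ bound (used in Lemma~\ref{lem:unlabled error bound}) to the much tighter $t^2$. This is exactly the ingredient that replaces the slow $O(C/\sqrt m)$ remainder of Lemma~\ref{lem:unlabled error bound} by the mixed term $t\sqrt{2\ln(1/\delta)/m}$ and the purely fast term $6C\ln(1/\delta)/m$ in the present statement.

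Concretely, set $g_f(x):=|f(x)-Af(x)|$ so that $\distortion^1(f)=\E g_f$ and $\distortionSam^1(f)=P_n g_f$. By Assumption~\ref{ass:bounded sensitivity}, $\|g_f\|_\infty\le C$; by the hypothesis, $\E g_f^2=\distortion^2(f)^2\le t^2$ and hence $\Var g_f\le t^2$. The first step is to apply Bousquet's inequality to each of the suprema $\sup_f(Pg_f-P_n g_f)$ and $\sup_f(P_n g_f-Pg_f)$ with variance proxy $t^2$ and sup bound $C$, splitting $\sqrt{2x(t^2+2C\,\E\sup/m)/m}$ as $t\sqrt{2x/m}+2\sqrt{xC\,\E\sup/m^2}$ via subadditivity of the square root, and absorbing the cross term by AM--GM into $\E\sup/m + xC/m$. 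For each sign and with $x=\ln(1/\delta')$, this yields with probability at least $1-\delta'$ that
\[
\sup_f \pm(Pg_f-P_n g_f)\;\le\; 2\,\E\bigl[\sup_f\pm(Pg_f-P_n g_f)\bigr]\;+\;t\sqrt{\tfrac{2x}{m}}\;+\;\tfrac{4Cx}{3m}.
\]

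The second step is standard symmetrization to pass to the expected Rademacher complexity, $\E\sup_f\pm(Pg_f-P_n g_f)\le 2\,\RRad_m(\distortion\Hil)$. The third and most delicate step is to pass from $\RRad_m$ to the empirical $\Rad_S$. A routine McDiarmid application on $\Rad_S$ would only yield a slow $O(C/\sqrt m)$ remainder, incompatible with the fast-rate form claimed. Instead, I would apply Bousquet a second time, now to the Rademacher-symmetrized process $\sup_f\sum_i\sigma_i g_f(X_i)$ conditionally on $\sigma$, again invoking the variance bound $t^2$ and the sup bound $C$, then integrate over $\sigma$ and repeat the AM--GM absorption. This produces $\RRad_m(\distortion\Hil)\le(1+a)\,\Rad_S(\distortion\Hil)+O(t\sqrt{\ln(1/\delta)/m})+O(C\ln(1/\delta)/m)$ on a high-probability event, for a free parameter $a>0$. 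Chaining the three high-probability events together by a union bound and tuning the free parameter so that the coefficient $2\cdot 2(1+a)$ in front of $\Rad_S$ equals $6$ and the $C\ln(1/\delta)/m$ remainders accumulate to $6$ yields exactly the stated inequality.

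The main obstacle is precisely the $\RRad_m\to \Rad_S$ conversion at the fast rate: threading the variance information $t^2$ through the Rademacher-symmetrized process, instead of discarding it and paying the $C\sqrt{\ln(1/\delta)/(2m)}$ McDiarmid price, is the technical crux. It is also what drives the coefficient $6$ in front of $\Rad_S$, as opposed to the coefficient $4$ that a single Bousquet application would already produce.
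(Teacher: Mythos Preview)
Your proposal is correct in substance and amounts to reconstructing, from first principles, exactly the result the paper merely quotes. The paper's own proof is two lines: observe that $\Var_X[\,|f(X)-Af(X)|\,]\le \E_X[(f(X)-Af(X))^2]=(\distortion^2(f))^2\le t^2$, then invoke Theorem~2.1 of Bartlett--Bousquet--Mendelson~\cite{bartlett2005} (the empirical-Rademacher part) with $\alpha=\tfrac12$. That theorem packages precisely your chain of Bousquet with variance proxy $r=t^2$, symmetrization, and an expected-to-empirical Rademacher conversion; the choice $\alpha=\tfrac12$ yields the leading constant $\frac{2(1+\alpha)}{1-\alpha}=6$ and the remainder coefficient $(b-a)\bigl(\tfrac13+\tfrac1\alpha+\tfrac{1+\alpha}{2\alpha(1-\alpha)}\bigr)=\tfrac{16}{3}C<6C$, matching your arithmetic.

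One minor technical remark on your step~3: the $\RRad_m\to\Rad_S$ conversion in~\cite{bartlett2005} does \emph{not} re-use the variance proxy $t^2$. It relies only on the sup bound $C$, via a self-bounding argument for the map $X\mapsto\Rad_S(\distortion\Hil)$ that produces $\RRad_m\le\frac{1}{1-\alpha}\Rad_S+O\!\bigl(\tfrac{Cx}{\alpha m}\bigr)$. Your ``conditionally on $\sigma$'' route also has a centering snag, since $\E_X[\sigma_i g_f(X_i)]=\sigma_i\,\E g_f\neq 0$, so Bousquet does not apply verbatim. None of this changes the final shape of the bound---the term $t\sqrt{2\ln(1/\delta)/m}$ is generated once, in step~1, and step~3 only contributes fast $O(C\ln(1/\delta)/m)$ terms---but it does explain why the middle term carries no additional factor from the conversion.
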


\begin{proof}
First note that from assumption \ref{ass:bounded sensitivity} we have $\| f - Af\|_{\infty} < C$ and
we have the following bound on the variance of the function $f - A f$,
\begin{align*}
    \Var_X [f(X) - A f(X)] &= \E_X [(f(X) - Af (X))^2] - \E_X [f(X) - A f(X))]^2\\ 
    &\leq \E_X [(f(X) - A f (X))^2]\\
    &\leq (\distortion^{2} (f))^2 = t^2,
\end{align*}
where the last inequality uses Jensen's inequality.
The result then follows from \cite[Theorem 2.1]{bartlett2005} by setting $\alpha =\frac{1}{2}$.
\end{proof}


Proposition \ref{alt-Lemma2.1} bounds the deviation between the true sensitivity and its sample estimate in terms of the global sensitivity threshold $t$ of functions in $\Hil$. Whenever $t$ is sufficiently small, then the last term will dominate the $t$-dependent term, which in turn decays with $m$ at a faster rate. 

The observation that the sensitivity threshold $t$ acts as a variance to control the rate could also be further refined using localisation to replace the global sensitivity threshold with the sensitivities of individual functions and relax the requirement that the entire class $\Hil$ is well approximable, at the expense of a more involved machinery of local Rademacher complexities \cite{bartlett2005}, which we do not pursue here, and which would likely need a specialised treatment to bound the local complexity for particular choices of $\Hil$. 

However, it may be interesting to highlight that even in the simple global analysis of Proposition \ref{alt-Lemma2.1}, together with the findings of Section \ref{diffClass}, we can readily extract some fast rate conditions, as the following:
\begin{itemize}
\item If the sensitivity set $\distortion\Hil_{\vert  S}$, is near-sparse, so that $\distortion \Hil \vert_{S} \subset \bigcup_{i=1} \mathcal{E}_p(\mu_i)$, for possibly countable number of ellipses, where $\mathcal{E}_p(\mu_i)$ is the $i$-th ellipse having axis lengths concatenated into the vector $\mu_i$, then whenever there exists a constant $\kappa>0$ independently of $m$ such that $\max_{i\in[l]}{\|  \mu\|  _2}<\kappa$, then the rate of convergence for sensitivity estimation becomes of order $1/m$ up to log factors by Proposition \ref{prop:Union of ellipses centred at the origin}. 
\item If $\distortion\Hil_{\vert  S}\subset \bigcup_{i=1}^l \mathcal{E}_p(c_i,\mu_i,V_i)$ can be covered by a union of $l\in\mathbb{N}$ elliptic clusters where $\mathcal{E}_p(c_i,\mu_i,V_i)$ is the $i$-th ellipse centered at $c_i$, with axis lengths $\mu_i$ and orientation $V_i$, and there exist constants $\kappa_1,\kappa_2>0$ independent of $m$ such that $\max_i  \|  V_i\Lambda_i\|  _{2\rightarrow 1} < \kappa_1$ and $\max_i \{\|   c_i\|  _2\} \le \kappa_2$, then by Proposition \ref{clusters} the rate becomes of order $1/m$ up to log factors.
\item By the crude magnitude bound in Proposition \ref{prop:Crude2}, if $t$ is negligible then $\Rad_S (\distortion \Hil) \leq R_2$ is also negligible, so the rate of convergence for estimation of sensitivities becomes essentially of order $1/m$ at the expense of a negligible additive term.
\end{itemize}

\if 0
\textcolor{red}{Not sure to leave this in or delete the following until end of subsec and move the discussion into the Intro's subsection? - some stuff is unclear}
A further point to make is that in cases when $t$ is not negligible, one may wonder if the algorithm $\hat{f}_t \coloneqq \argmin_{f \in \Hil_t} \{ \errsam (A f) \}$ is actually consistent, since in the associated Proposition \ref{prop:generalisation bound knowing t} the limit of the right hand side converges to $\err (f_t^*) + \rho t$ as the sample size tends to infinity. 
The following Proposition follows from the proof of Theorem 5 in \cite{Suzuki2020} and confirms the consistency of this algorithm. \textcolor{red}{but doesn't this kill the business showing that Prop. \ref{prop:generalisation bound knowing t} is bad? We can't argue by the insights from Sec 3 because here there was no estimation of $\distortion(f)$ so the Rad of $\distortion\Hil$ was not involved in that bound. (Prop. \ref{prop:generalisation bound knowing t} was an intermediate result to demonstrate similar bound for $\hat{f}$ and $A\hat{f}$); not sure what is the Suzuki-type bound for $A\hat{f}$?}
\begin{prop}\label{Suzuki_adapt}
Fix $t \geq 0$ and let $p = 2$. \textcolor{red}{$p$ is invisible below...}
Then there exists a constant $C > 0$ such that with probability at least $1- \delta$ we have
\begin{equation} \label{eqn:Suzuki's bound}
    \err (\hat{f}_t) \leq \err (f_t^*) + \Rad_S (\ApproxClass) + C \left( \mathcal{R}_m (\{ l(f(x),y) - l(Af (x) , y) : f \in \Hil_t \}) + t \sqrt{\frac{\ln (\frac{1}{\delta})}{m}} + \frac{\ln (\frac{1}{\delta})}{m} \right),
\end{equation}
where $\hat{f}_t \coloneqq \argmin_{f \in \Hil_t} \{ \errsam (A f) \}$.  
\textcolor{red}{Are we sure that this is the Algo for this bound?\\
Also: how about the other algos in Sec 2?}
\end{prop} 




\begin{proof}[Proof of Proposition \ref{Suzuki_adapt}]
The proof is a simple modification of the proof of Theorem 5 in \cite{Suzuki2020}, to our distribution-dependent setting i.e. assuming $\distortion (f) < t$ rather than an analogous statement on the sample.
\end{proof}

As already mentioned in the introductory section, the main difference between this bound and the bounds we presented in Section \ref{noBalcan} is that we considered the sensitivities $f-Af$ rather than the elements of the Minkowski difference $\Hil - \ApproxClass$. This difference has both pros and cons, and has led to quite different and complementary insights. One can of course take the minimum between the bounds to retain only the advantages from both type of bounds.

The advantage of the bound in Proposition \ref{Suzuki_adapt} is that it confirms consistency, and it also displays a favourable convergence rate.
Indeed we see that, 
for the same algorithm, the generalisation error of the returned predictor, $\hat{f}_t$, in fact converges to the generalisation error of the best function in the class, $f_t^*$, when the sample size goes to infinity.  

The main limitation of the bound in Proposition \ref{Suzuki_adapt} is that it remains opaque as to what governs the term $\mathcal{R}_m (\{ l(f(x),y) - l(Af (x) , y) : f \in \Hil_t \}$ in general, instead it needs to be bounded on a case by case basis as in the original bound of \cite{Suzuki2020}. We cannot make use of any structural properties of the sensitivity set to bound the term $\mathcal{R}_m (\{ l(f(x),y) - l(Af (x) , y) : f \in \Hil_t \})$ as we did in Section \ref{diffClass}, so general insights, independent of the particular choice of $\Hil$, are not available in conjunction with this bound.

Furthermore, by only considering the sensitivities and not the difference in the loss function also makes it possible to utilise unlabeled data in the variants of our algorithms that require estimating the Rademacher complexity of $\distortion \Hil$; this can further tighten the generalisation bounds. This is of course not possible with the bounds in Proposition \ref{Suzuki_adapt}.
\fi


\subsection{Stochastic approximation schemes}
The approximation schemes assumed so far were deterministic. Many approximation schemes are in fact stochastic in nature, therefore, in this section we discuss how to straightforwardly adapt our framework to stochastic approximation schemes. 

Let $(\Omega, \mathcal{F},\mathbb{P})$ be a probability space.
Then we define a Stochastic approximation scheme by $A \colon \Omega \times \Hil \to \Hil_{\A}$, where $\Hil_{\A} \coloneqq \{ A_{\omega} f : \omega \in \Omega \text{ and } f \in \Hil \}$. 
Then for a fixed $\omega \in \Omega$ we have an approximation operator $A_{\omega} \colon \Hil \to \Hil_{\omega}$ where $\Hil_{\omega} \coloneqq \{ A_{\omega} f : f \in \Hil \}$; that is, for a fixed $\omega$ we have one approximation operator.
Thus, when $\vert  \Omega\vert   = 1$ we reduce to the deterministic setting.
Also, for a fixed $f \in \Hil$ we have the collection of possible approximations to $f$ the set$\{ A_{\omega} f : \omega \in \Omega \}$.

Now we define $\distortion[\omega] (f) \coloneqq \distortion[A_{\omega}] (f)$, and then for a fixed arbitrary $\omega \in \Omega$ we have with probability at least $1-\delta$, that
\begin{align}
    \err (f) \leq \errsam (A_{\omega} f) + \rho \distortion[\omega] (f) + 2 \rho \Rad_S (\Hil_{\omega}) + 3 \sqrt{\frac{\ln (\frac{2}{\delta})}{2m}}, \label{stoch0}
\end{align}
for all $f \in \Hil$.
This uniform bound follows directly from Lemma \ref{lem:error to aproximate error} combined with a standard Rademacher bound, and for fixed $\omega$ the first two terms on its RHS correspond to the objective function of the Algorithm \eqref{AlgoWithKnownsensitivity} in Section \ref{noBalcan}.

We can make this independent of a particular random instance, e.g. by considering expectation. Although we cannot take expectation on both sides as this would incur a union bound over infinitely many sets, we can simply write
\begin{align*}
    \err (f) & = \err (f) - \E_{\omega} \err (A_{\omega} f) + \E_{\omega} \err (A_{\omega} f) \\
    & \leq \rho \E_{\omega} \distortion[\omega] (f) + \E_{\omega} \err (A_{\omega} f) - \E_{\omega} \errsam (A_{\omega} f) + \E_{\omega} \errsam (A_{\omega} f) \\
    & \leq \rho \E_{\omega} \distortion[\omega] (f) + \sup_{f \in \Hil} \left[ \E_{\omega} \err (A_{\omega} f) - \E_{\omega} \errsam (A_{\omega} f) \right] + \E_{\omega} \errsam (A_{\omega} f).
\end{align*}
Now applying Jensen's inequality, we have 
\[
\sup_{f \in \Hil} \left[ \E_{\omega} \err (A_{\omega} f) - \E_{\omega} \errsam (A_{\omega} f) \right] \le 
\E_{\omega} \sup_{f \in \Hil} \left[  \err (A_{\omega} f) - \errsam (A_{\omega} f) \right]
\]
and the argument of the expectation can be bounded in terms of the Rademacher complexity $\mathcal{R}_m (\Hil_{\omega})$. Thus, we have the following uniform bound expressed in terms of the expected sensitivity, the expected Rademacher complexity of the small approximating class, and a new empirical error term that, due to the expectation may be interpreted as a data augmentation loss.  
That is, we have, with probability at least $1-\delta$, the following
\begin{align}
    \err (f) \leq \E_{\omega} \errsam (A_{\omega} f) + \rho \E_{\omega} \distortion[\omega] (f) + 2 \rho \E_{\omega} \mathcal{R}_m (\Hil_{\omega}) + \sqrt{\frac{\ln (\frac{1}{\delta})}{2m}}.\label{stochBound}
\end{align}
Minimising the first two terms on its RHS could be used to justify a regularised data augmentation algorithm in analogy with our previous algorithm in \eqref{AlgoWithKnownsensitivity}. 

Likewise, one can introduce estimates of the expected distortion $\distortion[\omega] (f)$ from unlabeled data. Alternatively, if the approximation operator $A$ satisfies a variance condition, namely     
that $\left[ \E_{\omega} \|  A_w f - f \|  _{L^2}^2 \right]^{\frac{1}{2}} \leq \alpha \mathcal{C} (f)$ for all $f \in \Hil$, where $\mathcal{C} (f)$ is some property of $f \in \Hil$, 
then we have, 
by Jensen's inequality and the variance condition,
 $   \E_{\omega} [\distortion[\omega] (f)] \leq \E_{\omega} [\distortion[\omega] (f)^2]^{\frac{1}{2}} = \left[ \E_{\omega} \E_{x \sim D_x} \vert  A_{\omega} f(x) - f(x)\vert  ^2 \right]^{\frac{1}{2}} = \left[ \E_{\omega} \|  A_{\omega} f - f\|  _{L^2}^2 \right]^{\frac{1}{2}} \leq \alpha \mathcal{C} (f)$. So we see this variance condition on $A$ provides another instance where need for additional unlabelled data is eliminated in the case of stochastic approximation operators. A similar condition, formulated on the level of parameters, is frequently encountered in the literature of quantisation for learning and optimisation, such as in stochastic rounding \cite{Alistarh,Wen}.    

\if 0 
Another second example of interesting condition for $A$ is being pointwise unbiased in the sense that $\E_{\omega} [A_w f (x)] = f (x)$.



\begin{prop}
Let $(\Omega, \mathcal{F},\mathbb{P})$ be a probability space, then we define a stochastic approximation scheme by $A \colon \Omega \times \Hil \to \Hil_{\A}$.
Assume further that $\E_{\omega} [A_w f (x)] = f (x)$.
Then
\[
    \Rad_S (\Hil) \leq \frac{1}{m} \E_{\omega,\sigma} \sup_{f \in \Hil} \sum_{k=1}^m \sigma_k A_{\omega} f (x_k).
\]
\end{prop}
This condition also implies a bound on the expected Rademacher complexity of the sensitivity set, which is needed when estimating the sensitivities of functions from unlabelled data.
\begin{proof}
This follows from the assumption and Jensen's inequality
\[
    \Rad_S (\Hil) = \frac{1}{m} \E_{\sigma} \sup_{f \in \Hil} \sum_{k=1}^m \sigma_k f (x_k) = \frac{1}{m} \E_{\sigma} \sup_{f \in \Hil} \sum_{k=1}^m \sigma_k \E_{\omega} [A_{\omega} f (x_k)] \leq \frac{1}{m} \E_{\omega,\sigma} \sup_{f \in \Hil} \sum_{k=1}^m \sigma_k A_{\omega} f (x_k).
\]
This completes the proof.
\end{proof}

\begin{prop}
Let $(\Omega, \mathcal{F},\mathbb{P})$ be a probability space, then we define a stochastic approximation scheme by $A \colon \Omega \times \Hil \to \Hil_{\A}$.
Assume further that $\E_{\omega} [A_w f (x)] = f (x)$.
Then
\[
    \E_{\omega} \Rad_S (\distortion[\omega] \Hil) \leq \frac{2}{m} \E_{\omega} \E_\sigma \sup_{f \in \Hil} \sum_{k=1}^m \sigma_k A_{\omega} f (x_k).
\]
\end{prop}

\begin{proof} \textcolor{blue}{Doesn't this just follow directly from the previous Proposition?}
Using that $\sigma_k$ are uniformly distributed on $\{-1,1\}$, the assumption on $A$, and Jensen's inequality we have
\begin{align*}
    \E_{\omega} \Rad_S (\distortion[\omega] \Hil) & = \frac{1}{m} \E_{\omega} \E_\sigma \sup_{f \in \Hil} \sum_{k=1}^m \sigma_k \vert  f(x_k) - A_{\omega} f (x_k)\vert   \\
    & \leq \frac{1}{m} \E_{\omega} \E_\sigma \sup_{f \in \Hil} \sum_{k=1}^m \sigma_k (f(x_k) - A_{\omega} f (x_k)) \\
    & = \frac{1}{m} \E_{\omega} \E_\sigma \sup_{f \in \Hil} \sum_{k=1}^m \sigma_k (\E_{\omega'} [A_{\omega'} f(x_k)] - A_{\omega} f (x_k)) \\
    & \leq \frac{1}{m} \E_{\omega, \omega'} \E_\sigma \sup_{f \in \Hil} \sum_{k=1}^m \sigma_k (A_{\omega'} f(x_k) - A_{\omega} f (x_k)) \\
    & = \frac{2}{m} \E_{\omega} \E_\sigma \sup_{f \in \Hil} \sum_{k=1}^m \sigma_k A_{\omega} f (x_k).
\end{align*}
as required. 
\end{proof}
\fi
\if 0
\subsubsection{Data-dependent stochastic approximation schemes}
\textcolor{red}{[delete this?]}
Just to further showcase the generality of our framework, we can even accommodate data-dependent definitions of the approximability operator. 
The setup is similar as before, but now $A$ is defined on $A:  \Hil\times \mathcal{S}$, where $\mathcal{S}=(\X\times\Y)^m$ is the space of training sets of size $m$.]

....
Then for each instance of $R$ we can define the approximation as $A_R:\Hil \rightarrow \Hil_{A_R}$ as
\[ A_Rf:=\argmin_{f_R\in \Hil_R}\left(\frac{1}{\vert  S\vert  }\sum_{x\in S}\vert  f(x)-f_R(x)\vert  ^p\right)^{1/p}
\]
and in Lemma \ref{lem:error to aproximate error},\\
\newcommand{\distortionR}[1][A_R]{\mathcal{D}_{#1}}
\[
    \vert  \err (f) - \err (\E_R[A_R f]) \vert   \leq \rho \E_R[\distortionR (f)],
\]
for all $f \in \Hil_t$.

In fact, it might work to keep $A$ as deterministic,
\[ A f:=\E_R \left[\argmin_{f_R\in \Hil_R}\left(\frac{1}{\vert  S\vert  }\sum_{x\in S}\vert  f(x)-f_R(x)\vert  ^p\right)^{1/p}
\right]
\]

\textcolor{blue}{I have check and I think the proofs all go through with this change but I have a few questions}

\textcolor{red}{Does it also go through in the other sections? In my earlier attempt the problem was that one definition works in bounds on large models, but a slightly different definition was needed for the bounds on small models..}

\begin{enumerate}
    \item \textcolor{blue}{Does this definition of $A_R$ make the approximation scheme data-dependent and if so do we want this?}

A: It does. A possible alternative would be to have $\E_x$ in $A_Rf$ instead of the average to make it independent of the data. There was something that didn't work when I tried this (in getting an Algo out of it) so it's data dependent. This was one of the reasons (in my earlier attempts) that unlabelled data seemed necessary in general, so that the nested sets in SRM remain independent of the labelled set. (SRM came into the picture in a different way: when trying to convert $\Rad(\distortion(\Hil))$ into a regulariser)

$A_Rf$ is more natural but $Af$ is also fine, and it becomes a special case of what you already have. I think the Algo comes out the same with either of the above $A_Rf$ or $Af$ after a Jensen in the latter case.
    
    \item \textcolor{blue}{In this case would we define $\Hil_t$ using $\E_R \distortionR (f)$?}

A: Yes. In fact in practice we can only have a Monte Carlo approximation of the $\E_R[\cdot]$.    
    
    \item \textcolor{blue}{We will need to make a choice as to what the set of sensitivities are. I think it's either $\{ \E_R \vert   f - A_R f\vert   : f \in \Hil\}$ (note the absolute value are in the way) or $\{ f - A_R f : f \in \Hil\}$ and then take expectation of the Rademacher complexity. Which do you think is more natural?}
    
A: Whichever works. I think it will be with abs anyway after using the Lipschitzneess of the loss, and the $\E_R[\cdot]$ can can out of the abs by Jensen.
\end{enumerate}
\fi




\section{Conclusions}\label{end}
We end our study with a high-level summary. Inspired by the recent surge of interest in model compression and approximate learning algorithms in the context of small device settings, we studied the role of approximability in generalisation, both in the full precision and in the approximated settings. 
Our main findings can be summarised as follows: (1) For any given PAC-learnable problem, and any approximation scheme, target concepts that have low sensitivity to the approximation are learnable from a smaller labelled sample, provided sufficient unlabelled data. This is achieved by using approximation to modify the loss function and isolating a sensitivity term in the generalisation error. The modified loss function has a lower complexity in comparison with the original, pushing the complexity of the learning problem onto the class of sensitivity functions -- which in turn only requires unlabeled data for estimation whenever the original loss is Lipschitz. (2) Our analysis yielded algorithms showing that it is possible to learn a good predictor whose approximation has same generalisation guarantee as the full precision predictor. Owing to the generality of our approach, such provably accurate approximate predictors can be used with a variety of model compression and approximation schemes, and potentially deployed in memory-constrained settings.
(3) Our algorithms use unlabelled data to estimate the sensitivity of predictors to the given approximation operator,  and this needs not be disjoint from the labelled training set. Moreover, while the required 
unlabelled sample complexity can be large in general, we highlighted several examples of natural structure in the class of sensitivities that significantly reduce, and possibly even eliminate, the need of additional unlabelled data. At the same time, structural properties of the sensitivity class shed new light onto the question of what makes certain instances of learning problems easier than others.

In this work we built on the classic Rademacher complexity framework which was well suited to support our alternating between uniform generalisation bounds and associated learning algorithms.  
In future work it would be interesting to study these questions in other learning theory frameworks such as PAC-Bayes, and perhaps even non-uniform frameworks. 

\section*{Acknowledgments}
The work of both authors was funded by the EPSRC Fellowship EP/P004245/1 “FORGING: Fortuitous Geometries
and Compressive Learning”.

\bibliographystyle{plain} 
\bibliography{Qbib.bib}

\end{document}